\documentclass[10pt]{article} 

\usepackage[accepted]{tmlr}


\usepackage[english]{babel} 		

\usepackage{xspace}
\usepackage[T1]{fontenc}    
\usepackage{microtype}      
\usepackage{placeins}       
\usepackage{titletoc}		
\usepackage{xcolor}         

\usepackage{graphicx}
\usepackage{subfigure}
\usepackage[labelfont=bf]{caption}
\usepackage{wrapfig}  

\usepackage{amsfonts}				
\usepackage{amsmath}				
\usepackage{siunitx}  				
\usepackage{xfrac}					
\usepackage{bm}						
\usepackage{mathtools}
\usepackage{mathabx}				
\usepackage{arydshln}  				
\usepackage{stmaryrd}  				

\usepackage{enumitem}

\usepackage{booktabs}               
\usepackage{multirow}               
\usepackage{soul}                   
\usepackage{changepage,threeparttable} 
\usepackage{tabularx}               
\usepackage{multirow}               
\usepackage{etoolbox}               
\sisetup{detect-weight,mode=text,group-digits=integer,separate-uncertainty=true, group-separator={,},output-exponent-marker=e}	
\newrobustcmd{\B}{\bfseries}

\definecolor{mydarkblue}{rgb}{0,0.08,0.45}
\usepackage[
colorlinks=true,
linkcolor=mydarkblue,
citecolor=mydarkblue,
filecolor=mydarkblue,
urlcolor=mydarkblue,
]{hyperref}       		
\usepackage[nameinlink,capitalize,noabbrev]{cleveref}
\usepackage{url}        
\usepackage[numbered]{bookmark}
\crefname{algocf}{alg.}{algs.}
\Crefname{algocf}{Alg.}{Algs.}
\Crefname{equation}{Eq.}{Eqs.}

\setlength{\marginparwidth}{2cm}
\usepackage[textsize=tiny]{todonotes}
\setuptodonotes{bordercolor=white}





\DeclareRobustCommand{\colorline}[1]{%
	\begin{tikzpicture}
		\raisebox{1.5pt}{
			\draw[#1,solid,line width=1.5pt] (0,0) -- (1em,0);
		}
	\end{tikzpicture}%
}

\DeclareRobustCommand{\dashedcolorline}[1]{%
	\begin{tikzpicture}
		\raisebox{1.5pt}{
			\draw[#1,dashed,line width=1.5pt] (0,0) -- (1em,0);
		}
	\end{tikzpicture}%
}






\usepackage{pifont}
%
%






\newcommand*{\eg}{e.g.\@\xspace}
\newcommand*{\ie}{i.e.\@\xspace}

\usepackage[super]{nth}						                

\makeatletter
\newcommand{\oset}[3][0ex]{%
	\mathrel{\mathop{#3}\limits^{
			\vbox to#1{\kern-2\ex@
				\hbox{$\scriptstyle#2$}\vss}}}}
\makeatother

\newcommand{\iidSample}{%
	\oset[.01em]{\scriptscriptstyle \text{iid}}{\sim}
}

\newcommand{\unifSample}{%
	\oset[.01em]{\scriptscriptstyle \text{unif.}}{\sim}
}

\newcommand\tightDots{\makebox[1em][c]{.\hfil.\hfil.}}

\makeatletter
\DeclareRobustCommand{\pdot}{\mathbin{\mathpalette\pdot@\relax}}
\newcommand{\pdot@}[2]{%
	\ooalign{%
		$\m@th#1\circ$\cr
		\hidewidth$\m@th#1\cdot$\hidewidth\cr
	}%
}
\makeatother




\newcommand{\imagenet}{\textsc{ImageNet}\xspace}

\newcommand{\cifarten}{\textsc{CIFAR-10}\xspace}
\newcommand{\cifarhun}{\textsc{CIFAR-100}\xspace}
\newcommand{\cifars}{\textsc{CIFAR-10/100}\xspace}

\newcommand{\mnist}{\textsc{MNIST}\xspace}

\newcommand{\resneteighteen}{\textsc{ResNet-18}\xspace}


\newcommand{\deepobs}{\textsc{DeepOBS}\xspace}



\DeclareMathOperator{\Tr}{Tr}								
\DeclareMathOperator{\diag}{diag}							
\DeclareMathOperator{\Span}{span}							
\newcommand*{\overlap}{\emph{overlap}}
\DeclareMathOperator{\dist}{dist}							
\DeclareMathOperator{\IoU}{IoU}							%
\DeclareMathOperator{\round}{round}							%
\DeclareMathOperator{\qr}{qr}
\DeclareMathOperator{\svd}{svd}
\DeclareMathOperator{\eigh}{eigh}

\newcommand\T{^{\top}} 										
\newcommand{\Cov}[1]{\mathrm{Cov}\left[#1\right]}			
\newcommand{\E}[2][]{\mathbb{E}_{#1}\left[#2\right]} 		
\newcommand{\eig}{\lambda}									
\newcommand{\bigO}{\mathcal{O}}								
\newcommand{\defeq}{\vcentcolon=}							

\newcommand{\acc}{\mathrm{acc}}

\newcommand{\R}{\mathbb{R}}					


\newcommand{\prob}{P}							
\newcommand{\ptrue}{\prob}

\newcommand{\model}{f}						
\newcommand{\modelp}{\model_{\params}}		

\newcommand{\inpts}{\vx}					
\newcommand{\inptspace}{\sX}				
\newcommand{\labelspace}{\sY}				

\newcommand{\labels}{\vy}					
\newcommand{\pred}{\hat{\labels}}			

\newcommand{\params}{\vtheta}				
\newcommand{\paramsdim}{D}					

\newcommand{\datatrain}{\sD_{\mathrm{train}}}		
\newcommand{\datasize}{N}							

\newcommand{\batch}{\gB}					
\newcommand{\batchsize}{B}					

\newcommand{\Loss}{L}						

\newcommand{\trueLoss}{\Loss_{\ptrue}}		
\newcommand{\empLoss}{\Loss_{\datatrain}}	
\newcommand{\batchLoss}{\Loss_{\batch}}		
\newcommand{\loss}{\ell}					

\newcommand{\grad}{\vg}						

\newcommand{\hessian}{\mH}						
\newcommand{\hessianbatch}{\hessian_{\batch}}	





\newcommand{\topes}{\begin{psmallmatrix}\!\mV \!\\ \!\bar{\mV} \!\end{psmallmatrix}}
\newcommand{\bulkes}{\begin{psmallmatrix}\!\mW \!\\ \!\bar{\mW}\!\end{psmallmatrix}}

\newcommand{\topparams}{\begin{psmallmatrix}\!\mV | \mW\!\end{psmallmatrix}}
\newcommand{\bulkparams}{\begin{psmallmatrix}\!\bar{\mV} | \bar{\mW}\!\end{psmallmatrix}}


\def\vmu{{\bm{\mu}}}
\def\vsigma{{\bm{\sigma}}}
\def\vtheta{{\bm{\theta}}}

\def\vdelta{{\bm{\delta}}}

\def\ve{{\bm{e}}}

\def\vg{{\bm{g}}}

\def\vm{{\bm{m}}}

\def\vq{{\bm{q}}}

\def\vu{{\bm{u}}}
\def\vv{{\bm{v}}}
\def\vw{{\bm{w}}}
\def\vx{{\bm{x}}}
\def\vy{{\bm{y}}}


\def\evm{{m}}

\def\mA{{\bm{A}}}

\def\mC{{\bm{C}}}
\def\mD{{\bm{D}}}
\def\mE{{\bm{E}}}

\def\mG{{\bm{G}}}
\def\mH{{\bm{H}}}
\def\mI{{\bm{I}}}

\def\mL{{\bm{L}}}
\def\mM{{\bm{M}}}

\def\mP{{\bm{P}}}
\def\mQ{{\bm{Q}}}
\def\mR{{\bm{R}}}

\def\mU{{\bm{U}}}
\def\mV{{\bm{V}}}
\def\mW{{\bm{W}}}
\def\mX{{\bm{X}}}
\def\mY{{\bm{Y}}}
\def\mZ{{\bm{Z}}}

\def\mPhi{{\bm{\Phi}}}
\def\mPsi{{\bm{\Psi}}}
\def\mLambda{{\bm{\Lambda}}}
\def\mSigma{{\bm{\Sigma}}}

\def\mOmega{{\bm{\Omega}}}
\def\mUpsilon{{\bm{\Upsilon}}}


\DeclareMathAlphabet{\mathsfit}{\encodingdefault}{\sfdefault}{m}{sl}
\SetMathAlphabet{\mathsfit}{bold}{\encodingdefault}{\sfdefault}{bx}{n}



\def\gB{{\mathcal{B}}}

\def\gG{{\mathcal{G}}}
\def\gH{{\mathcal{H}}}

\def\gM{{\mathcal{M}}}

\def\gR{{\mathcal{R}}}
\def\gS{{\mathcal{S}}}


\def\sB{{\mathbb{B}}}
\def\sC{{\mathbb{C}}}
\def\sD{{\mathcal{D}}}

\def\sM{{\mathbb{M}}}
\def\sN{{\mathbb{N}}}
\def\sO{{\mathbb{O}}}

\def\sR{{\mathbb{R}}}

\def\sX{{\mathbb{X}}}
\def\sY{{\mathbb{Y}}}


\newcommand{\sparsedim}{k}  
\newcommand{\defEq}{\coloneqq}  
\newcommand{\eqDef}{\eqqcolon}  

\newcommand{\eigQ}{\mU^{(\sparsedim)}}
\newcommand{\eigQtilde}{\tilde{\mU}^{(\sparsedim)}}
\newcommand{\maskQ}{\mI_{\paramsdim,\sparsedim}}

\newcommand{\gFra}{\mathfrak{g}}

\newcommand{\anyNorm}{\raisebox{-0.6pt}{\scalebox{0.7}{$\ast$}}}

\newcommand{\smallerTwo}{\raisebox{0pt}{\scalebox{0.5}{2}}}
\newcommand{\smallerF}{\raisebox{0pt}{\scalebox{0.5}{F}}}
\newcommand{\smallerH}{\raisebox{0pt}{\scalebox{0.5}{H}}}
\newcommand{\smallerO}{\raisebox{0pt}{\scalebox{0.45}{O}}}
\newcommand{\smallerI}{\raisebox{0pt}{\scalebox{0.45}{I}}}
\newcommand{\smallerL}{\raisebox{0pt}{\scalebox{0.45}{L}}}
\newcommand{\smallerR}{\raisebox{0pt}{\scalebox{0.45}{R}}}

\newcommand{\smallerCustom}[1]{\raisebox{0pt}{\scalebox{0.5}{#1}}}

\newcommand{\distG}{\dist_g}
\newcommand{\distCAny}{\dist_{c, \anyNorm}}
\newcommand{\distPAny}{\dist_{p, \anyNorm}}

\newcommand{\grassmannian}{\gG}

\newcommand{\equivGrass}{\gS}
\newcommand{\equivOrth}{\equivGrass^{\sO}}



\usepackage{acronym}
\acrodef{DL}[DL]{deep learning}
\acrodef{GP}[GP]{Gaussian process}
\acrodef{CNN}[CNN]{convolutional neural network}
\acrodef{LTH}[LTH]{Lottery Ticket Hypothesis}
\acrodef{OBS}[OBS]{Optimal Brain Surgeon}
\acrodef{IMP}[IMP]{Iterative Magnitude Pruning}
\acrodef{OUL}[OUL]{Overparametrized Underutilized Learning}
\acrodef{ECoSt}[ECoSt]{Early Collapse and Stabilization}
\acrodef{PHD}[PHD]{Parameter-Hessian Duality}
\acrodef{PHDw}[PHD\textsubscript{w}]{Weak Parameter-Hessian Duality}
\acrodef{PHDs}[PHD\textsubscript{s}]{Strong Parameter-Hessian Duality}
\acrodef{SGD}[SGD]{Stochastic Gradient Descent}
\acrodef{SGN}[SGN]{Stochastic Gradient Noise}
\acrodef{HVP}[HVP]{Hessian-Vector Product}
\acrodef{GNVP}[GNVP]{Gauss-Newton Vector Product}
\acrodef{SLQ}[SLQ]{Stochastic Lanczos Quadrature}
\acrodef{SVD}[SVD]{Singular Value Decomposition}
\acrodef{IoU}[IoU]{Intersection over Union}
\acrodef{OBD}[OBD]{Optimal Brain Damage}
\acrodef{OBS}[OBS]{Optimal Brain Surgeon}
\acrodef{BN}[BN]{Batch Normalization}
\acrodef{LLM}[LLM]{large language model}
\acrodef{EBLT}[EBLT]{Early-Bird Lottery Ticket}
\acrodef{MLP}[MLP]{Multi-Layer Perceptron}
\acrodef{SD}[SD]{Sub-Diagonal}
\acrodef{TRI}[TRI]{Triangular}
\acrodef{LR}[LR]{Low-Rank}
\acrodef{LRD}[LRD]{Low-Rank plus Diagonal}
\acrodef{NTK}[NTK]{Neural Tangent Kernel}
\acrodef{SAM}[SAM]{Sharpness-Aware Minimization}


\definecolor{TUred}{RGB}{165,30,55}
\definecolor{TUdark}{RGB}{50,65,75}
\definecolor{TUgold}{RGB}{180,160,105}

\definecolor{TUgray}{RGB}{185,184,188}

\definecolor{TUdarkblue}{RGB}{65,90,140}
\definecolor{TUblue}{RGB}{0,105,170}
\definecolor{TUlightblue}{RGB}{80,170,200}

\definecolor{TUlightgreen}{RGB}{125,165,75}
\definecolor{TUgreen}{RGB}{125,165,75}
\definecolor{TUdarkgreen}{RGB}{50,110,30}

\definecolor{TUlightred}{RGB}{200,80,60}
\definecolor{TUpurple}{RGB}{175,110,150}
\definecolor{TUorange}{RGB}{210,150,0}


\definecolor{SNSblue}{rgb}{0.1216, 0.4666, 0.7059}
\definecolor{SNSorange}{rgb}{1.0, 0.4980, 0.0549}
\definecolor{SNSgreen}{rgb}{0.1725, 0.6274, 0.1725}
\definecolor{SNSred}{rgb}{0.84, 0.15, 0.16}
\definecolor{SNSpurple}{rgb}{0.58, 0.40, 0.74}

\definecolor{SNSorange_shaded}{HTML}{ffcea3}
\definecolor{SNSblue_shaded}{HTML}{8ebad9}
\definecolor{SNSgreen_shaded}{HTML}{cae7ca}
\definecolor{SNSred_shaded}{HTML}{ea9293}

\definecolor{cmap_blue}{HTML}{002051}
\definecolor{cmap_yellow}{HTML}{fdea45}			

\usepackage{amsthm}
\usepackage{thmtools}
\usepackage{thm-restate}

\declaretheoremstyle[
headfont=\normalfont\bfseries,
notefont=\normalfont,
bodyfont=\normalfont,
headpunct={},
postheadspace=\newline,
spaceabove=1.5\parskip,  
]{definitionstyle}

\declaretheoremstyle[
headfont=\normalfont\bfseries,
notefont=\normalfont,
bodyfont=\normalfont\itshape,
headpunct={},
postheadspace=\newline,
spaceabove=1.5\parskip,  
]{lemmastyle}

\declaretheoremstyle[
headfont=\normalfont\bfseries,
notefont=\normalfont,
bodyfont=\normalfont\itshape,
headpunct={},
postheadspace=\newline,
spaceabove=1.5\parskip,  
]{theoremstyle}

\declaretheoremstyle[
headfont=\normalfont\bfseries,
notefont=\normalfont,
bodyfont=\normalfont,
headpunct={},
spaceabove=1.5\parskip,  
]{remarkstyle}

\declaretheorem[style=definitionstyle,name=Definition,numberwithin=section]{definition}
\declaretheorem[style=lemmastyle,name=Lemma,sibling=definition]{lemma}



\usepackage[linesnumbered,ruled]{algorithm2e}  

\SetCommentSty{algoCommentFont}  

\makeatletter
\patchcmd{\@algocf@start}
  {-1.5em}
  {0pt}
  {}{}
\makeatother



\title{Connecting Parameter Magnitudes and Hessian Eigenspaces at Scale using Sketched Methods}


\author{\name Andres Fernandez \email a.fernandez@uni-tuebingen.de \\
  \addr Tübingen AI Center\\
  University of Tübingen
  \AND
  \name Frank Schneider \email f.schneider@uni-tuebingen.de \\
  \addr Tübingen AI Center\\
  University of Tübingen
  \AND
  \name Maren Mahsereci \email maren.mahsereci@uni-tuebingen.de \\
  \addr Yahoo Research
  \AND
  \name Philipp Hennig \email philipp.hennig@uni-tuebingen.de \\
  \addr Tübingen AI Center\\
  University of Tübingen
}



\begin{document}

\maketitle

\begin{abstract}
        Recently, it has been observed that when training a deep neural net with SGD, the majority of the loss landscape's curvature quickly concentrates in a tiny \emph{top} eigenspace of the loss Hessian, which remains largely stable thereafter.
        Independently, it has been shown that successful magnitude pruning masks for deep neural nets emerge early in training and remain stable thereafter.
        In this work, we study these two phenomena jointly and show that they are connected:
        We develop a methodology to measure the similarity between arbitrary parameter masks and Hessian eigenspaces via Grassmannian metrics. We identify $\overlap$ as the most useful such metric due to its interpretability and stability.
        To compute $\overlap$, we develop a matrix-free algorithm based on sketched SVDs that allows us to compute over $\num{1000}$ Hessian eigenpairs for nets with over $\num{10}$M parameters---an unprecedented scale by several orders of magnitude.
        Our experiments reveal an $\overlap$ between magnitude parameter masks and top Hessian eigenspaces consistently higher than chance-level, and that this effect gets accentuated for larger network sizes.
        This result indicates that \emph{top Hessian eigenvectors tend to be concentrated around larger parameters}, or equivalently, that \emph{larger parameters tend to align with directions of larger loss curvature}.
        Our work provides a methodology to approximate and analyze deep learning Hessians at scale, as well as a novel insight on the structure of their eigenspace. 
\end{abstract}

\section{Introduction}
\label{sec:intro}


\Ac{DL} benefits from overparametrization; but not all parameters are equally important.
Often, a substantial portion of parameters can be \emph{pruned}, \ie removed, without compromising the model's performance \citep[see][]{Blalock2020,sparsity_review21}.
One efficient and popular method to identify these subnetworks is via parameter magnitude \citep{han_magnitude_pruning}.
Interestingly, these subnetworks materialize very early in training \citep{lth}, and once they emerge, their topology stops changing significantly \citep{achille,early_bird}.
In other words, competitive subnetworks \emph{crystallize early} in training and remain \emph{stable} thereafter (\Cref{sec:background_pruning}, \Cref{fig:collapse,fig:stabilization}).

Concurrent research focuses on the loss Hessian, characterizing the loss landscape's curvature.
Multiple studies found that empirically the Hessian spectrum separates into two parts: The \emph{bulk} subspace, with many, near-zero eigenvalues, and the \emph{top} subspace with a few eigenvalues of significantly larger magnitude \citep[\eg][]{dauphin14,hessian_bulk}.
Crucially, \citet{GurAri2018} observed that, after initial training iterations, the gradient predominantly lies in this top subspace where it remains relatively stable throughout training. Analogous to the work on parameter masks, this indicates that the top Hessian eigenspace \emph{crystallizes early} in training and tends to remain \emph{stable} (\Cref{sec:background_hessian}, \Cref{fig:collapse,fig:stabilization}).

\begin{figure*}[t]
	\centering
	\includegraphics[width=0.95\textwidth]{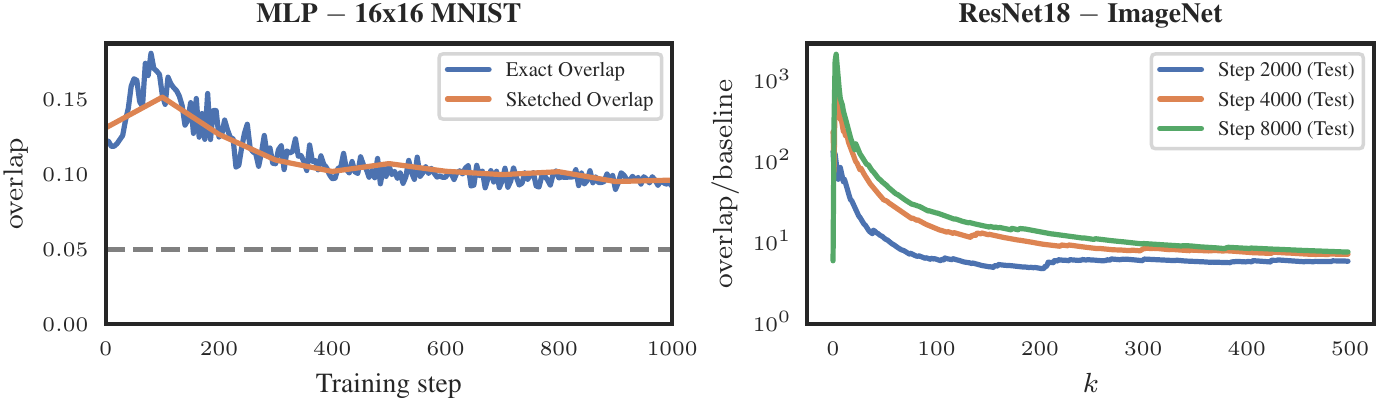}
        \caption{
          \textbf{Overlap between top-$\sparsedim$ parameter magnitude masks and top-$\sparsedim$ Hessian eigenspaces is consistently and substantially above random chance.}
          \textit{(Left)}
          Measurements for a $\num{7030}$-parameter network trained on $16\!\times\!16$ downsampled \mnist until convergence (see \Cref{sec:exp_tinymnist} and \Cref{fig:collapse}): Exact $\overlap$ between top-$\sparsedim$ parameters and eigenvectors with $k\!=\!\num{350}$ (\colorline{SNSblue}), approximate $\overlap$ via sketched eigendecomposition (\colorline{SNSorange}), and chance-level baseline (\dashedcolorline{gray}) (see \Cref{sec:metrics_comparison}). Note how $\overlap$ is larger than chance, and sketched $\overlap$ is a good approximation.
          \textit{(Right)}
          Lines show the ratio \emph{sketched $\overlap$ vs. chance-level baseline} for a model with $>\!\num{11}$M parameters trained on \imagenet (see \Cref{sec:exp_largescale}), at three different points during training, and as a function of $\sparsedim$. Note how $\overlap$ is always higher than baseline, up to a factor of 1000.
}
	\label{fig:highlight_figure}
\end{figure*}

In this work, we explore the connection between these largely independent lines of research---both reporting \emph{early crystallization} of substructures.
Our \textbf{contributions} include:
\begin{enumerate}[left=\parindent]
	\item We establish a common theoretical ground for comparing binary parameter masks $\vm_{\sparsedim}$ (which select $k$ parameters and discard the rest) and top-$\sparsedim$ eigenspaces of a Hessian.
	Both can be cast as rank-$\sparsedim$ orthogonal matrices belonging to the same Stiefel manifold (\Cref{sec:pruning_subspace}).
	This allows a direct comparison of their spans using \emph{Grassmannian metrics}\footnote{\emph{Grassmannians} are manifolds of low-dimensional subspaces of a given vector space.} (see \Cref{sec:metrics_grassmann}).
	We review popular Grassmannian metrics and identify $\overlap$ (\Cref{eq:overlap}) as the most meaningful metric (\Cref{sec:metrics_comparison}).
	\item To efficiently compute $\overlap$, we develop \textsc{SEIGH} (\Cref{sec:sketchy_overlap} and \Cref{alg:seigh}), a matrix-free eigendecomposition based on sketched SVDs \citep{ssvd19}.
	Our open source implementation\footnote{\url{https://github.com/andres-fr/hessian_overlap}} allows to compute top-$\sparsedim$ Hessian eigendecompositions for $k\!>\!10^3$ on neural networks with over $\num{10}$M parameters, an unprecedented scale by orders of magnitude.
	\item We provide empirical evidence that the similarities between the spaces induced by parameter magnitude masks and top-$\sparsedim$ Hessian eigenspaces are consistently and substantially higher than random chance (\Cref{fig:highlight_figure}).
    This suggests that, in \ac{DL}, top Hessian eigenvectors tend to be concentrated around larger parameters throughout the training process (\Cref{sec:exp}).
\end{enumerate}

Seminal work on ``Optimal Brain Damage'' \citep{obd_lecun} established a theoretical link between \emph{individual parameter values} and loss curvature, but only \emph{at convergence}. Recent work often focuses on loss curvature for \emph{layer-wise} parameter groups \citep[\eg][]{hessian_layerwise}, or involves just a few ($k\!\approx\!10$) Hessian eigenpairs without emphasis on parameters \citep[\eg][]{GurAri2018,papyan_traces,vivit}.
In contrast, we connect Hessian \emph{eigenspaces} to the \emph{spans} of arbitrary parameter subsets, \eg magnitude masks, throughout the \emph{entire training}---including the relevant early stage---in an interpretable and scalable manner. We can thus jointly study the connection between these two phenomena and provide novel insights into a neural net's Hessian structure. This connection implies being able to approximate expensive Hessian quantities via cheap parameter inspection, which bears potential for downstream tasks where the Hessian plays a prominent role, such as optimization \citep{martens_phd}, pruning \citep{obd_lecun}, or uncertainty estimation \citep{laplace}.

\section{Hessians, Parameters and Early Crystallization}
\label{sec:background}


We consider a supervised classification/regression setup with train set $\datatrain\!\defEq \!\{(\inpts_n, \labels_n)\}_{n=1}^{\datasize}$ of labeled data $(\inpts_n, \labels_n)\!\in\!\inptspace\!\times\!\labelspace$, from an unknown data-generating distribution $\ptrue$.
The neural net $\modelp(\inpts)\!:\!\inptspace\!\to\!\labelspace$ maps inputs $\inpts$ to predictions $\pred$ via parameters $\params\!\in\!\R^\paramsdim$.
A loss function $\loss\!:\!\labelspace\!\times\!\labelspace\!\to\!\R_{\geq 0}$ penalizes differences between prediction $\pred$ and true label $\labels$.
The goal is to minimize the inaccessible \emph{risk} $\trueLoss(\params)\!\defEq\!\int \loss(\modelp(\inpts),\labels) d \ptrue$ via the proxy \emph{empirical risk} $\empLoss(\params)\!\defEq\!\frac{1}{N} \sum_{n=1}^N \loss(\modelp(\inpts_n), \labels_n)$.
For large $N$, we approximate $\empLoss(\params)$ using \emph{mini-batches} $\batch\!\iidSample\!\datatrain$ of $\batchsize\!\ll\!\datasize$ samples.
For $f$, $\ell$ twice differentiable, we use the \emph{gradient} $\grad(\params)\!\defEq\!\nabla_{\params} \Loss (\params)\!\in\!\sR^{\paramsdim}$ and the \emph{Hessian} $\hessian(\params)\!\defEq\!\nabla_{\params}^2 \Loss(\params)\!\in\!\sR^{\paramsdim\!\times\!\paramsdim}$.
With $\grad, \hessian$ we refer to \emph{any} loss gradient or Hessian.
To emphasize the data domain, we use a subindex, \eg $\hessianbatch$ for the Hessian of the mini-batch loss.

\subsection{The Hessian in Deep Learning}
\label{sec:background_hessian}

The Hessian plays a prominent role in \ac{DL} applications. A useful characterization of $\hessian$ is its eigendecomposition $\hessian\!=\!\mU \mLambda \mU\T\!=\!\sum_{i=1}^{\paramsdim} \eig_i \vu_i \vu_i\T$. Here, $\mU$ is orthogonal, with \emph{eigenvectors} $\vu_i$, and $\mLambda$ is diagonal and real-valued with (ordered) \emph{eigenvalues} $\lvert \eig_1 \rvert\!\geq\!\tightDots\!\geq\!\lvert \eig_\paramsdim \rvert$.
We call $\mU^{(\sparsedim)}\!\defEq\!\{ \vu_i \}_{i=1}^\sparsedim$ the \emph{top-$\sparsedim$ eigenbasis} of $\hessian$, and $\Span(\mU^{(\sparsedim)})$ the \emph{top-$\sparsedim$ eigenspace}.
The top-$\sparsedim$ eigendecomposition $\hessian^{(\sparsedim)}\!=\!\sum_{i=1}^{\sparsedim} \eig_i \vu_i \vu_i\T$ minimizes $\lVert \hessian - \hessian^{(\sparsedim)} \rVert$ for all unitarily invariant norms \citep[][Th.~2.4.8]{golub13}.

Recent literature has extensively investigated the Hessian {\bf spectrum} of neural nets, revealing that the eigenvalues are typically clustered into two parts: (1) The \emph{bulk} of eigenvalues with near-zero magnitude and (2) a few \emph{top} eigenvalues with significantly larger magnitude \citep[\eg][]{hessian_bulk,papyan_3level}.
Thus, $\hessian$ can be well-approximated by its top eigenpairs.
For the Hessian {\bf eigenspace}, \citet{intrinsic_dim} showed that projecting the whole space onto a few random, fixed dimensions still allows \ac{SGD} to perform competitively---provided enough dimensions are given---leading to the idea of an intrinsic dimensionality of problems.
\citet{GurAri2018} observed that this restriction to a lower-dimensional, fixed subspace seems to happen spontaneously anyway: After a brief initial training period, the gradient predominately lies within a small subspace spanned by the few top Hessian eigenvectors and this space changes little over the remaining training process.
However, these phenomena might rely on optimizer and model choices \citep{intrinsic_dim,large_scale_spectrum}.

One fundamental issue greatly limiting the use of $\hessian$ for \ac{DL} is its prohibitively large size, with $\paramsdim^2$ entries. Consequently, most scalable methods are \emph{matrix-free}, relying on \acp{HVP} to compute linear maps $\vw\!=\!\hessian\!\vv$ in just $\mathcal{O}(\paramsdim)$ memory and time \citep{fast_hvp}.
Examples are the computation of spectral densities \citep{pyhessian,papyan18_spectrum} or top-$\sparsedim$ eigendecompositions \citep{GurAri2018, vivit}.
To make Hessian properties more accessible, specialized \ac{DL} libraries have been developed recently \citep[\eg][]{backpack,pyhessian,hesscale,curvlinops}, but efficiently accessing large portions of the Hessian remains a major challenge (see \Cref{sec:sketchy_overlap}).

\subsection{Parameter Masks and Early Crystallization}
\label{sec:background_pruning}

Binary parameter masks $\vm_\sparsedim\!\in\!\sB^{\paramsdim}$, with $\sB\!=\!\{0,1\}$, consisting of $\sparsedim$ ones and the rest zeros, can be used to define subsets of parameters from a given model.
A mask is $\sparsedim$-sparse if it has $\sparsedim$ non-zero elements.
We measure mask sparsity using the ratio $\rho\!\defeq\!\sfrac{\sparsedim}{D}$.
For non-binary vectors $\vv\!\in\!\R^\paramsdim$, we instead measure whether a small subset of indices $\iota$ contains a large proportion of the total norm of the vector.
When $\iota$ is known, this can be directly expressed as the ratio: $\kappa(\vv)\!\defeq\!\sfrac{\lVert \vv_{\iota} \rVert_2^2}{\lVert \vv \rVert_2^2}$ \citep[see][]{sparsity_measures}.
A popular choice for $\vm_\sparsedim$ is to take the $\sparsedim$-largest parameters by magnitude: it is computationally cheap, and has been shown to be effective for neural net pruning \citep{han_magnitude_pruning}, leading to smaller models that can often achieve competitive performances with $\rho\!\in\![1\%, 10\%]$ \citep[\eg][]{sparsity_review19,Blalock2020,sparsity_review21}.

In this work, we do not perform any pruning, but rather focus on a key feature of parameter magnitude masks. Not only can they lead to competitive performance when used for pruning and be found very early in training \citep{lth, lth2} but they also \emph{stabilize soon afterwards}:
\citet{early_bird} compared Hamming distances between periodically extracted pruning masks and found they stop changing early in training, aligning with the loss of information plasticity reported in \cite{achille}.
This \emph{early crystallization} bears a striking parallel with observations made for the top Hessian subspace, and motivates our investigation: Are these two phenomena connected? How would one measure this connection? What would such a connection tell us about the parameters and the Hessian?
\section{Parameter Masks as Orthogonal Projections}
\label{sec:pruning_subspace}


Quantifying the connection between parameter magnitude masks and top-$\sparsedim$ Hessian eigenspaces---both exhibiting early crystallization---requires a way to relate a mask to a subspace.
In this section, we observe that both $\eigQ$ (the top-$\sparsedim$ eigenbasis) and any $\sparsedim$-sparse mask\footnote{This can be generalized to masks and eigenspaces with different $\sparsedim$ using Schubert varieties \citep{schubert_distances}.} $\vm_k$ can be characterized as elements of the same compact Stiefel manifold $\sO^{\paramsdim \times \sparsedim}\!=\!\{\mQ\!\!:\!\!\mQ\!\in\!\sR^{\paramsdim \times \sparsedim},\, \mQ\T \mQ\!=\!\mI_\sparsedim \}$ \citep[][]{grass_riemann}, where $\mI_\sparsedim\!\in\!\R^{\sparsedim\!\times\!\sparsedim}$ denotes the identity.
Specifically for masks, we further consider the subset $\sM^{\paramsdim\!\times\!\sparsedim}\!\subset\!\sO^{\paramsdim\! \times\!\sparsedim}$, with columns having exactly one $1$, each row at most one $1$, and the rest is zeros.

\textbf{Reordering parameters:}
Recall from \cref{sec:background_hessian} that the Hessian eigenvalues are sorted by descending magnitude, exposing a single cutting point between \emph{top} and \emph{bulk} eigenspace at dimension $\sparsedim$.
To simplify notation, we impose a similar cutting point to the parameters, by defining a permutation matrix $\mP\!\in\!\sM^{\paramsdim\!\times\!\paramsdim}$ for any given mask $\vm_k$, such that the mask entries are grouped in \emph{selected} (\ie $\sparsedim$ entries with $\evm_i\!=\!1$) and \emph{discarded} (\ie $\evm_i\!=\!0$), \ie $\tilde{\vm}\!\defEq\!\mP\T \vm\!=\!(1, \tightDots, 1, 0, \tightDots, 0)$.
For a parameter magnitude mask, for example, we can define $\mP$ to order the parameters by nonincreasing magnitude such that $i\leq\!j \Rightarrow \lvert \mP\vm \rvert_i\!\geq\!\lvert \mP\vm \rvert_j$.
We can now permute the parameters $\tilde{\params}\!=\!\mP\T\params$, as well as the Hessian rows and columns $\tilde{\hessian}\!\defEq\!\mP\T\hessian\mP\!=\!\tilde{\mU}\mLambda\tilde{\mU}\T$.
This has no loss of generality, since $(\vm, \params, \hessian)\!\cong\!(\tilde{\vm}, \tilde{\params}, \tilde{\hessian})$ is an isomorphism, $\hessian$ and $\tilde{\hessian}$ are \emph{similar}, and the loss curvature remains unaltered ($\tilde{\params}\T \tilde{\hessian} \tilde{\params}\!=\!\params\T \hessian \params$). Then, any permuted $\sparsedim$-sparse masking operation can be expressed as:
\vspace{-0.8em}
\begin{align}
        \mP\T(\vm_k \pdot \params) =
        \tilde{\vm_k} \pdot \tilde{\params} =
	%
	\underbrace{
		\begin{pmatrix}
			\mI_{\sparsedim} & 0 \\
			0 & 0
		\end{pmatrix}
	}_{\defEq \tilde{\mPhi}}
	\tilde{\params}
	%
        \eqDef \maskQ \maskQ\T \tilde{\params}, \, \notag
\end{align}
\\[-0.75em]
where $\maskQ\!=\!\begin{psmallmatrix}\!\mI_\sparsedim \!\\ \!\bm{0} \!\end{psmallmatrix}\!\in\!\R^{\paramsdim\!\times\!\sparsedim}$,
which is clearly an element of the (binary) Stiefel manifold $\sM^{\paramsdim\!\times\!\sparsedim}$. Also note that this holds for any $\mP$, not only for those defined for parameter magnitude masks.

\textbf{Partitioning $\tilde{\hessian}$:} Consider now the following partition of the reordered Hessian with $\mV, \mD \!\in\!\mathbb{R}^{\sparsedim \times \sparsedim}$, $\bar{\mW}, \bar{\mE} \!\in\! \mathbb{R}^{(\paramsdim-\sparsedim) \times (\paramsdim-\sparsedim)}$, and $\bar{\mV}, \bar{\mW}\T \!\in\! \mathbb{R}^{(\paramsdim-\sparsedim) \times (\sparsedim)}$:
\vspace{1em}
\renewcommand\arraystretch{1.3}  
\begin{align}
	\label{eq:partition}
	\tilde{\hessian} \defEq
	\smash{
		\underbracket{
			\left(
			\begin{array}{c;{3pt/3pt}c}
				\mV & \mW \\
				\hdashline[3pt/3pt]
				\bar{\mV} & \bar{\mW}
			\end{array}
			\right)
		}_{\tilde{\mU}=\mP\T \mU}
	}
	~
	\smash{
		\underbracket{
			\left(
			\begin{array}{c;{3pt/3pt}c}
				\mD &  \\
				\hdashline[3pt/3pt]
				& \bar{\mE}
			\end{array}
			\right)
		}_{\mLambda\phantom{\T}}
	}
	~
	\smash{
		\underbracket{
			\left(
			\begin{array}{c;{3pt/3pt}c}
				\mV\T & \bar{\mV}\T \\
				\hdashline[3pt/3pt]
				\mW\T & \bar{\mW}\T
			\end{array}
			\right)
		}_{\tilde{\mU}\T = \mU\T \mP}
	}\,.\quad
\end{align}

\vspace{2em}

With this partition, $\eigQtilde \!\eqDef\!\topes$ is the \emph{top}-$\sparsedim$ Hessian eigenbasis, and $\bulkes$ the \emph{bulk} eigenbasis.
Conversely, the rows of $\topparams$ correspond to the \emph{selected} parameters, and $\bulkparams$ to the \emph{discarded} ones.
Since $\tilde{\mU}$ is orthogonal, we have $\eigQtilde\!\in\!\sO^{\paramsdim \times \sparsedim}$, \ie an element of the same Stiefel manifold.

\section{Measuring Subspace Similarity via Grassmannian Metrics}
\label{sec:metrics}


We set out now to quantify the similarity between a $\sparsedim$-sparse parameter mask and the top-$\sparsedim$ Hessian eigenspace.
Specifically, we are only interested in the similarity of their spans, since the \emph{spaces} are the ones reported to undergo early crystallization.
To connect parameter spaces to loss curvature, one may consider randomly perturbing the parameters, and empirically measuring the impact on the loss. This turns out to be problematic due to the non-PSD nature of $\hessian$ (see \Cref{app:perturbation_study}).
Various workarounds to obtain nonnegative measurements can lead to promising results, but they also entail tradeoffs (see \Cref{app:beyond_perturbation}).
Instead, Grassmannian metrics provide a natural and theoretically grounded way of achieving our goal, given that $\sparsedim$-sparse parameter masks and the top-$\sparsedim$ Hessian eigenspace can both be cast as elements of the same Stiefel manifold (\Cref{sec:pruning_subspace}).
We review popular Grassmannian metrics in \Cref{sec:metrics_grassmann}, and analyze them in more depth in \Cref{sec:metrics_comparison}, finding that the $\overlap$ metric, highlighted here, is both interpretable and stable:
\begin{align}
	\overlap(\mQ_1,\!\mQ_2)\!=\!\frac{1}{k}\lVert \mQ_1\T\!\mQ_2 \rVert_F^2\!=\!\frac{1}{k}\lVert \cos(\vsigma) \rVert_F^2\!\in\![0,\!1] \,.
	\label{eq:overlap}
\end{align}
%
Intuitively, we can see how a zero $\overlap$ between $\mQ_i$ and $\mQ_j$ indicates that the spans of both matrices are orthogonal to each other, and an $\overlap$ of 1 indicates that both spaces are identical.
We can also interpret $\overlap$ as the rotation-invariant cosine similarity between $\mQ_1$ and $\mQ_2$, as induced  by the standard Euclidean metric on matrices.

\subsection{Grassmann Manifolds and their Metrics}
\label{sec:metrics_grassmann}

Grassmann manifolds are extensively studied \citep[\eg][]{grass_physics, grass_riemann, grass_handbook} and have recently been applied to \ac{DL} \citep[\eg][]{GurAri2018,grass_dl,vivit}.
A Grassmann manifold $\grassmannian_{\sparsedim, \paramsdim}$ is the set of all $\sparsedim$-dimensional subspaces of a given $\paramsdim$-dimensional Euclidean space.
Two orthogonal matrices $\mQ_1,\!\mQ_2\!\in\!\sO^{\paramsdim\!\times\!\sparsedim}$ map to the same element $\gFra\!\in\!\grassmannian_{\sparsedim, \paramsdim}$ if and only if their column span is identical.
Thus, the subset of all matrices in $\sO^{\paramsdim\!\times\!\sparsedim}$ that map to $\gFra_i\!=\!\Span(\mQ_i)$ forms an \emph{equivalence class} $\equivOrth_i\!\defEq\!\{\mQ_j\!\!:~\!\!\mQ_j \mZ_j\!=\!\mQ_i,\,\mZ_j\T\mZ_j\!=\!\mI_k\}$ \citep[\eg][]{grass_eas98}.

The \emph{geodesics} (\ie shortest paths) between two elements in $\grassmannian_{\sparsedim, \paramsdim}$ are available in closed-form and can be characterized in terms of the \emph{principal angles} $\vsigma\!\in\![0, \frac{\pi}{2}]^\sparsedim$, \ie the ``amount of rotation'' required to transition from one space to another.
The principal angles between two matrices in $\sO^{\paramsdim\!\times\!\sparsedim}$ can be obtained via an \acs{SVD} of their product, satisfying the invariance to changes within $\equivOrth_i$ (see \Cref{app:grassmann_overview}).
Based on this, there are several \emph{Grassmannian metrics} capturing different notions of distance between subspaces \citep[\eg][]{grass_eas98, grass_unitary}.
In the following, we highlight a few.
We abbreviate $\dist_{\anyNorm}(\gFra_i, \gFra_j)\!=\!f(\vsigma_{i\shortrightarrow j})$ as $\dist_{\anyNorm}\!=\!f(\vsigma)$, where $\ast$ here parametrizes any unitarily invariant norm:
\begin{enumerate}[label=\textbf{\alph*)}]  
	\item {\bf Geodesic distance:} This is the \emph{arc length} of the geodesic between the respective spaces in $\mathcal{G}$, defined as $\distG\!=\!\lVert \vsigma \rVert_2\!\in\![0,~\frac{\pi}{2}\sqrt{\sparsedim}]$.
	\item {\bf Chordal norm:} $\distCAny$, obtained by minimizing $\lVert \mQ_i\mZ_i - \mQ_j\mZ_j \rVert_{\anyNorm}$ over orthogonal matrices $(\mZ_1, \mZ_2)$ (for that reason it is also called \emph{Hausdorff distance}). The $\ell_2$ and $\ell_F$ norms admit a closed-form solution in terms of principal angles: $\dist_{c,\smallerTwo}\!=\!\lVert 2 \sin\left(\frac{1}{2}\vsigma\right) \rVert_\infty \!\in\![0, \sqrt{2}]$ and $\dist_{c,\smallerF}\!=\!\lVert 2 \sin\left(\frac{1}{2}\vsigma\right) \rVert_2\!\in\![0, \sqrt{2\sparsedim}]$.
	\item {\bf Projection norm:} Also called the \emph{gap metric}, it uses the unique orthogonal projector representation of a given subspace, \ie $\mPsi_i = \mQ_i \mQ_i\T$, as follows: $\distPAny\!=\!\lVert \mPsi_i\!-\!\mPsi_j \rVert_{\anyNorm}$. Here, we also have closed-form expressions for the $\ell_2$ and $\ell_F$ norms: $\dist_{p,\smallerTwo}\!=\!\lVert \sin(\vsigma) \rVert_\infty\!\in\![0, 1]$ and $\dist_{p,\smallerF}\!=\!\lVert \sin(\vsigma) \rVert_2\!\in\![0, \sqrt{\sparsedim}]$.
	\item {\bf Fubini-Study:} This quantity is a measure of the {\it acute angle} between both spaces, generalized to higher dimensions: $\dist_a\!=\!\arccos \left(\lvert \det(\mQ_i\T \mQ_j) \rvert \right)\!=\!\arccos \left(\prod_i \cos(\sigma_i) \right)\!\in\![0,\frac{\pi}{2}]$.
	\item  {\bf Overlap:} The $\overlap\!=\!\frac{1}{\sparsedim}  \lVert \mPsi_i \mQ_j \rVert_F^2\!\in\![0, 1]$ quantity was used in \cite{GurAri2018} to measure subspace \emph{similarity}. It is not a metric \emph{per se}, since it is highest for equivalent subspaces and decreases with their distance, but it is a bijection of $\dist_{p,\smallerF}$, as follows: $\frac{1}{k}\lVert \mPsi_i \mQ_j \rVert_F^2\!=\!\frac{1}{k}\lVert \mQ_i\T \mQ_j \rVert_F^2\!=\!\frac{1}{k}\lVert \cos(\vsigma) \rVert_F^2\!=\!1\!-\!\lVert \cos(\vsigma) \rVert_F^2\!=\! 1\!-\!\frac{1}{k}\dist_{p,F}^2$.
\end{enumerate}

While the above metrics apply to any pair of matrices from $\sO^{\paramsdim\!\times\!\sparsedim}$, there are also relevant metrics specific to $\sM^{\paramsdim\!\times\!\sparsedim}$ (boolean subset of $\sO^{\paramsdim\!\times\!\sparsedim}$ defined in \Cref{sec:pruning_subspace}), that can be characterized in a similar manner. Consider an arbitrary pair of $k$-sparse masks $(\vm_i, \vm_j)$, and their corresponding orthogonal projectors $\mPhi \defEq \diag(\vm)$. Then we have:
\begin{enumerate}[label=\textbf{\roman*)}]  
	\item {\bf \ac{IoU}:} Typically used as an evaluation metric, it is defined as the relative number of entries present in {\it both} masks, \ie $\IoU\!\defEq\!\frac{\vm_i \cap \vm_j}{\vm_i \cup \vm_j}\!\in\![0, 1]$. Then we have $\vm_i\!\cap\!\vm_j\!=\!\lVert \mPhi_i \mPhi_j \rVert_F^2 = k\!\cdot\!\overlap$, and if both masks are $\sparsedim$-sparse, we also have $\vm_i\!\cup\!\vm_j\!=\!2k\!-\!(\vm_i\!\cap\!\vm_j) = k (2 - overlap)$, yielding the bijection $\overlap\!=\!\frac{2 \IoU}{1\!+\!\IoU}$. 
	\item {\bf Hamming distance:} This quantity, defined as the minimum number of bit-flips needed to pass from one mask to another, was used in \cite{early_bird} to measure distances between pruning masks. It is in fact a Grassmannian metric: $\dist_{\smallerH}\!\defEq\!\lVert \vm_i - \vm_j \rVert_2^2\!=\!\lVert \Phi_i\!-\!\Phi_j \rVert_F^2\!=\!\dist^2_{p,\smallerF} \in [0, k]$, which means that the bijection $\overlap\!=\! 1\!-\!\frac{1}{k}\dist_{\smallerH}$ also holds.
\end{enumerate}

\subsection{Comparing Grassmannian Metrics}
\label{sec:metrics_comparison}

\begin{figure}
  \begin{minipage}[c]{0.5\textwidth}
    \centering
    \vspace{-0.76em}
    \includegraphics[width=1\textwidth]{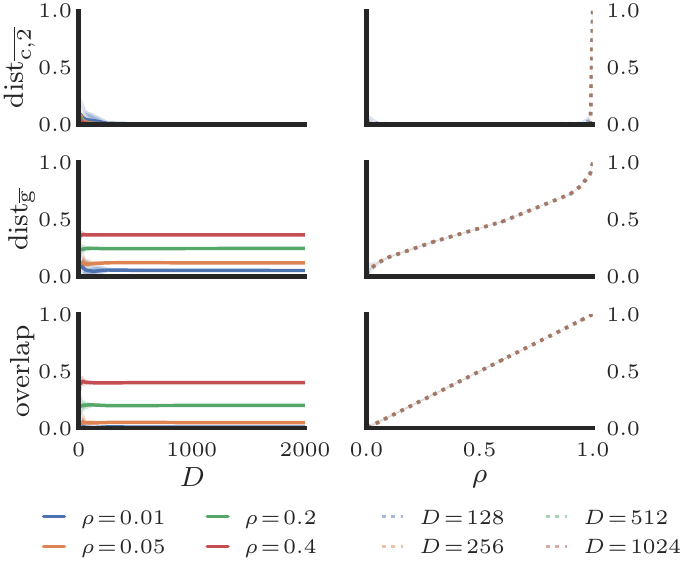}
  \end{minipage}
  \hfill
  \begin{minipage}[c]{0.38\textwidth}
    \caption{\textbf{Behaviour of different Grassmannian metrics for random pairs of matrices and masks.}
	  \textit{(Left)} As a function of $\paramsdim$, for different sparsity ratios $\rho\!\defEq\!\sfrac{\sparsedim}{D}$. \textit{(Right)} As a function of $\rho$, for different ambient dimensions $\paramsdim$. Lines show the median metric for $50$ random pairs, the (almost imperceptible) shaded regions span the $5$-$95$ percentiles (see \Cref{fig:synth_r_BB} for broader distributions).
          Each row shows a selected Grassmannian metric (see \Cref{sec:metrics_grassmann} for more metrics):
	  $\mathrm{dist}_{\overline{c,2}}$ is a representative example of a \emph{collapsing} metric, being $\approx\!\num{0}$ almost everywhere.
	  The $\overlap$ metric is non-collapsing, and its expectation equals $\rho$.
    }
    \label{fig:grassmann_metrics}
  \end{minipage}
  \vspace*{-0.5em}
\end{figure}

Our aim is now to identify the most useful Grassmannian metric for comparing neural net parameter masks and Hessian eigenspaces.
To empirically compare how different Grassmannian metrics change as a function of ambient dimension $\paramsdim$ (for fixed $\rho$) and sparsity ratio $\rho\!=\!\sfrac{\sparsedim}{D}$ (for fixed $\paramsdim$), we conduct a synthetic experiment: We randomly draw matrices from $\sO^{\paramsdim\!\times\!\sparsedim}$ and masks from $\sM^{\paramsdim\!\times\!\sparsedim}$ and compute various Grassmannian metrics between them, normalized to be in $[0,1]$, with higher metric indicating higher similarity. Results are highlighted in \Cref{fig:grassmann_metrics} (experimental procedure and further experiments in \Cref{app:grass_synth}).
This empirical analysis is complemented with a theoretical result showing that the expected $\overlap$ equals exactly $\rho$, as seen in \Cref{fig:grassmann_metrics} (proof in \Cref{app:overlap_proof}).
The main insights from this analysis (\Cref{app:grass_discussion} extends the discussion) are:

\begin{enumerate}[label=\Roman*., topsep=0pt, left=\parindent]
\item For a fixed sparsity ratio $\rho$, the expectation of all metrics becomes \emph{predictable} as $\paramsdim$ increases: Already in the $\paramsdim\!\approx\!10^3$ regime (much smaller than modern \ac{DL} scenarios), expectations converge tightly to values that \emph{seem to only depend on $\rho$} (\Cref{fig:grassmann_metrics} left). We gathered empirical baseline values in \Cref{table:metrics_convergence}.
\item Not all metrics are equally \emph{informative}: Some metrics quickly collapse to $\num{0}$ as $\paramsdim$ increases and $\rho$ decreases (see \eg \Cref{fig:grassmann_metrics} (top) or \Cref{fig:synth_D_OB,fig:synth_r_OB}). Looking at the metric definitions in \Cref{sec:metrics_grassmann}, we can see the reason: Collapsing metrics (\eg $\dist_{\overline{c,\smallerTwo}}$, $\dist_{\overline{p,\smallerTwo}}$ and $\dist_{\overline{a}}$) are strongly influenced by individual angles, whereas non-collapsing metrics (\eg $\dist_{\overline{g}}$, $\dist_{\overline{c,\smallerF}}$, $\dist_{\overline{p,\smallerF}}$ and $\overlap$) average over all angles.
\end{enumerate}

With these insights, we identify $\overlap$ as the most suitable metric among the reviewed ones, since:
(1) \emph{It provides a stable and efficient baseline:}
Among the non-collapsing metrics, it features the smoothest expectation, equaling exactly $\rho\!=\!\sfrac{\sparsedim}{\paramsdim}$. This provides us with a stable, simple and theoretically-grounded baseline, establishing a \textbf{``chance-level overlap''} to compare against\footnote{In \Cref{app:random_masks} we also compare this analytical baseline to the empirical alternative of sampling random masks, showing it yields similar results.}. This is particularly beneficial for deep learning, bypassing the need to compute empirical baselines for large $\paramsdim$ over many $\rho$.
(2) \emph{It is related to other metrics:}
$\overlap$ can be mapped to other popular metrics such as the Hamming distance, \ac{IoU}, or $\dist_{p,\smallerF}$ via bijections (see \Cref{sec:metrics_grassmann}).
(3) \emph{It has precedence in the literature:} The $\overlap$ metric has been used to measure eigenspace similarity in recent deep learning works \citep[\eg][]{GurAri2018,vivit}.

\section{Sketched Hessian Eigendecompositions to Measure Overlap}
\label{sec:sketchy_overlap}


To compute the $\overlap$ between parameter magnitude masks and top-$\sparsedim$ Hessian eigenspaces, we can now simply plug in the permutation from \Cref{sec:pruning_subspace} into \cref{eq:overlap}: $\mQ_1$ becomes the permuted mask matrix $\maskQ$, and $\mQ_2$ the correspondingly permuted top eigenspace $\eigQtilde$, yielding:
\begin{equation}
	\overlap(\maskQ, \eigQtilde)
	\!=\! \frac{1}{k}\lVert \maskQ\T \eigQtilde \rVert_F^2
	\!=\! \frac{1}{k}\lVert \mV \rVert_F^2 \,,
\end{equation}
\ie our problem reduces to computing the Frobenius norm of $\mV$, a $\sparsedim\!\times\!\sparsedim$ submatrix of $\eigQtilde$ (\Cref{eq:partition}).

\textbf{A case for eigendecompositions:}
We aim to estimate $\lVert \mV \rVert_F^2$, where $\mV$ is part of $\eigQtilde$, a $\paramsdim\!\times\!\sparsedim$ orthogonal matrix spanning the top eigenspace. Since all standard algorithms to find such matrices (Gram--Schmidt, Householder transformation, Givens rotation) require to keep track of at least $k$ full vectors \citep{golub13}, \emph{a memory requirement of $\bigO\!\left(\sparsedim\paramsdim\right)$ seems inescapable in general}. Still, one may recognize that any orthogonal matrix $\hat{\mQ}$ with the same span as $\eigQtilde$ yields the same Frobenius norm (due to unitary invariance). However, we need to ensure that the span of said $\hat{\mQ}$ does not overlap with the bulk eigenspace: It needs to be exclusively associated with the top-$\sparsedim$ eigenvalues. As a consequence, \emph{knowledge of the eigenvalues also seems inescapable in general}. In conclusion, to measure the top-$\sparsedim$ $\overlap$, we argue that a rank-$\sparsedim$ eigendecomposition is generally needed.
 
\textbf{A case for \textit{sketched} decompositions:}
Among the methods that satisfy the memory requirement of $\bigO\!\left(\sparsedim\paramsdim\right)$,  \textbf{orthogonal iterations} \citep{golub13}, an extension of Rayleigh's power method, is a popular and effective one, requiring $\bigO\!\left(\sparsedim\right)$ measurements per iteration and $\tau$ iterations, to converge at a rate proportional to $\lvert \lambda_{k+1} / \lambda_k \rvert^\tau$ \citep[][Th.~7.3.1]{golub13}. This leads to large $\tau$ for smoothly decaying spectra, likely scenario for $\hessian$, thus $\bigO\!\left(\tau \sparsedim\right)$ \acfp{HVP} can become infeasible. Ritz accelerations provide better convergence but still suffer from this issue \citep[][10.1]{golub13}.
Alternatively, \textbf{Lanczos iterations} convergence faster \citep{lanczos_convergence}, \citep[][10.1.6]{golub13}, and can be used to approximate eigenpairs via Ritz pairs \citep[][8.1.4]{golub13} as well as spectral densities of \eg large-scale deep learning Hessians via Stochastic Lanczos Quadrature \citep{papyan18_spectrum}.
The main issue here is that measurements must be done in sequential form, which can get very slow for $\hessian$, and they involve matrix powers, numerically unstable for rank-defficient matrices like $\hessian$.
\textbf{Sketched methods} \citep[\eg][]{rmt_svd}, based on random measurements, not only satisfy $\bigO\!\left(\sparsedim\paramsdim\right)$ memory, they also exhibit \emph{good convergence for $\bigO\!\left(\sparsedim \right)$ measurements} and are \emph{parallelizable and numerically stable} \citep[][1.4.2, 4.2, and 6.2]{rmt_svd}. This, combined with the ability to perform matrix-free random measurements \citep[][3.2]{ssvd19}, makes it possible to compute Hessian eigendecompositions at unprecedented scales, since the main bottleneck is now the $\bigO\!\left(\sparsedim\paramsdim\right)$ memory requirement \citep[][7.1]{ssvd19} (for example, storing $\num{1000}$ eigenpairs for a ResNet-18 \citep{resnet} with $\num{13}$M float parameters requires roughly $\num{50}$GB of memory, same as 100 eigenpairs for a $\num{10}\times$ larger model).
Last but not least, affordable \emph{a posteriori} methods allow to measure approximation error and rank of the recovered decomposition \citep{rmt_svd, ssvd19}.

The core idea behind the sketched SVD is that, given a linear operator $\mA\!\in\!\mathbb{C}^{D_L\!\times\!D_R}$ of numerical rank $k$, any orthogonal matrices $\mP\!\in\!\mathbb{C}^{D_L\!\times\!n_{\smallerO}}, \mQ\!\in\!\mathbb{C}^{D_R\!\times\!n_{\smallerO}}$ that approximately capture the column and row space of $\mA$, respectively, satisfy $\mA\!\approx\!\mP \mP^* \mA \mQ \mQ^*$ and can be efficiently obtained from $n_{\smallerO}\!=\!\bigO\!\left(\sparsedim\right)$ random measurements followed by QR orthogonalization \citep[][]{rmt_svd}.
Typically, $n_{\smallerO} \!\ll\! \min(D_L, D_R)$, which results in a \emph{core matrix} $\mC\!\defEq\!\mP^*\!\mA\mQ \!\in\!\mathbb{C}^{n_{\smallerO}\!\times\!n_{\smallerO}}$ that is much smaller\footnote{We use lowcase $n, k$ to indicate smaller dimension than $D$.} than $\mA$ and can be decomposed via classical methods.
In summary, \emph{a potentially large and matrix-free linear operator $\mA$ is decomposed into two thin matrices $(\mP, \mQ)$  and a small matrix $\mC$}.
A further development features an oversampled \emph{inner matrix} $\mM_I\!\defEq\!\mUpsilon_I^* \mA \mOmega_I \!\in\!\mathbb{C}^{n_{\smallerI}\!\times\!n_{\smallerI}}$ for random measurements $\mUpsilon_I,\!\mOmega_I$ of columns each \citep{boutsidis}, where $n_{\smallerI}$ is typically slightly larger than $n_{\smallerO}$. The \textsc{SSVD} from \citep[][Sec.~2]{ssvd19}, gathered in \Cref{alg:ssvd}, follows this idea:
\begin{align}
  \label{eq:ssvd_summary}
  \mA \approx \mP (\mP^* \mA \mQ) \mQ^* 
  \approx \mP \underbrace{(\mUpsilon_I^* \mP)^\dagger \mUpsilon_I^* \mA \mOmega_I \big[ (\mOmega_I^* \mQ)^\dagger \big]^*}_{\mC = \mU \mSigma \mV^*~~(\svd)} \mQ^*
  = (\mP \mU) \mSigma (\mV^* \mQ^*),
\end{align}
This yields an SVD, since $\mP\mU$ and $\mQ\mV$ are orthogonal.
It requires $n_{\smallerI}\!+\!2 n_{\smallerO}$ measurements---see lines 1-3 in \Cref{alg:ssvd}: outer measurements are used to produce $\mP$ and $\mQ$, and inner measurements to produce $\mC$---, followed by thin matrix operations only.
The memory cost is dominated by storing $\mP$ and $\mQ$, \ie $\bigO\!\left( n_{\smallerO} (D_L\!+\!D_R) \right)$.
Arithmetic is dominated by the QR orthogonalizations needed to obtain $\mP$ and $\mQ$, as well as the two least-squares problems needed to solve the pseudoinverses (see \citep{ssvd19}).
Crucially, this approximation only requires a single pass over $\mA$, yields tight bounds \citep[][Th. 5.1]{ssvd19} and leads to superior performance \citep[][Sec.~7]{ssvd19}, due to its numerical stability, oversampled $\mM_I$, as well as uncorrelated measurements between $\mP$, $\mQ$ and $\mM_I$ \citep[][5.5]{rmt_svd} \citep[][2.8.1]{ssvd19}.

\begin{minipage}[t]{0.48\textwidth}
  {
\begin{algorithm}[H]
  \DontPrintSemicolon
  \KwIn{$n_{\smallerI}\!\in\!\mathbb{N}$ \tcp*{No.~of inner measurements}}
  \KwIn{$n_{\smallerO} \leq n_{\smallerI}$ \tcp*{No.~of outer measurements}}
  \KwIn{$\mA\!\in\!\mathbb{C}^{D_{\smallerL}\!\times\!D_{\smallerR}}$ \tcp*{Linear operator}}
  \KwIn{$\mUpsilon_I\!\in\!\mathbb{C}^{D_{\smallerL}\!\times\!n_{\smallerI}}$ \hspace{-15pt} \tcp*{Left inner random matrix}}
  \KwIn{$\mUpsilon_O\!\in\!\mathbb{C}^{D_{\smallerL}\!\times\!n_{\smallerO}}$ \hspace{-15pt} \tcp*{Left outer random matrix}}
  \KwIn{$\mOmega_I\!\in\!\mathbb{C}^{D_{\smallerR}\!\times\!n_{\smallerI}}$   \hspace{-15pt} \tcp*{Right inner random matrix}}
  \KwIn{$\mOmega_O\!\in\!\mathbb{C}^{D_{\smallerR}\!\times\!n_{\smallerO}}$   \hspace{-15pt} \tcp*{Right outer random matrix}}
  \tcp{$\mP\!\in\!\sC^{D_{\smallerL} \!\times\! n_{\smallerO}}$, $\mQ\!\in\!\sC^{D_{\smallerR} \!\times\! n_{\smallerO}}$, $\mU \!\in\!\sC^{n_{\smallerO} \!\times\! n_{\smallerO}}$}
  \KwOut{$(\mP, \mU, \mSigma, \mV^*, \mQ^*)$ with $\mP \mU \mSigma \mV^* \mQ^*\!\approx\!\mA$}
  \tcp{Perform outer measurements}
  $\mM_{LO}^* \leftarrow \mUpsilon_O^* \mA$ \hspace{-1cm} \tcp*{$\mathbb{C}^{n_{\smallerO} \times D_{\smallerR}}$}
  $\mM_{RO} \leftarrow \mA \mOmega_O$ \hspace{-1cm} \tcp*{$\mathbb{C}^{D_{\smallerL}\times n_{\smallerO}}$}
  \tcp{Perform inner measurements}
  $\mM_I \leftarrow \mUpsilon_I^*\!\mA \mOmega_I$ \hspace{-1cm} \tcp*{$\mathbb{C}^{n_{\smallerI} \times n_{\smallerI}}$}
  \tcp{Orthogonalize outer measurements}
  $(\mQ, \_) \leftarrow \qr(\mM_{LO})$ \hspace{-1cm} \tcp*{$\mP$ has orthonormal columns}
  $(\mP, \_) \leftarrow \qr(\mM_{RO})$ \hspace{-1cm} \tcp*{$\mQ$ has orthonormal columns}
  \tcp{Solve core matrix via least squares and SVD}
  $\mC \leftarrow (\mUpsilon_I^* \mP)^\dagger \mM_I$\\
  $\mC \leftarrow \big[ (\mOmega_I^* \mQ)^\dagger \mC \big]^*$ \tcp*{$\sC^{n_{\smallerO} \times n_{\smallerO}}$}
  $(\mU, \mSigma, \mV^*) \leftarrow \svd(\mC)$ \tcp*{$\mSigma$ are the singular values}
  \Return{$(\mP, \mU, \mSigma, \mV^*, \mQ^*)$}
  \caption{\textsc{SSVD} (from \citet{ssvd19})}
  \label{alg:ssvd}
\end{algorithm}
}

\end{minipage}
\hfill
\begin{minipage}[t]{0.48\textwidth}
  {
\begin{algorithm}[H]
  \DontPrintSemicolon
  \KwIn{$n_{\smallerI}\!\in\!\mathbb{N}$ \tcp*{No.~of inner measurements}}
  \KwIn{$n_{\smallerO} \leq n_{\smallerI}$ \tcp*{No.~of outer measurements}}
  \KwIn{$\mA\!\in\!\mathbb{C}^{D\!\times\!D}$ \tcp*{Hermitian linear operator}}
  \KwIn{$\mUpsilon\!\in\!\mathbb{C}^{D\!\times\!n_{\smallerI}}$ \hspace{-15pt} \tcp*{Left inner random matrix}}

  \KwIn{$\mOmega_I\!\in\!\mathbb{C}^{D\!\times\!(n_{\smallerI}- n_{\smallerO})}$  \hspace{-15pt} \tcp*{Right inner random matrix}}
  \KwIn{$\mOmega_O\!\in\!\mathbb{C}^{D\!\times\!n_{\smallerO}}$ \tcp*{Right outer random matrix}}
  \tcp{$\mQ\!\in\!\sC^{D \!\times\! n_{\smallerO}}$, $\mU\!\in\!\sC^{n_{\smallerO} \!\times\! n_{\smallerO}}$}
  \KwOut{$(\mQ, \mU, \mLambda)$ with $\mQ \mU \mLambda \mU^* \mQ^*\!\approx\!\mA$}
  \tcp{Perform measurements. $[\mX, \mY]$ means matrix concatenation.}
  \tcp{Note the recycled $\mOmega_O$ measurements.}
  $\mM_O \leftarrow \mA \mOmega_O$ \tcp*{$\mathbb{C}^{D\times n_{\smallerO}}$}
  $\mM_I \leftarrow \mUpsilon^* [\mA \mOmega_I, \mM_O]$ \hspace{-1cm} \tcp*{$\mM_I\!=\!\mUpsilon^* \mA [\mOmega_I, \mOmega_O] \in \mathbb{C}^{n_{\smallerI}\times n_{\smallerI}}$}
  \tcp{Orthogonalize outer measurements}
  $(\mQ, \_) \leftarrow \qr(\mM_O)$ \hspace{-1cm} \tcp*{$\mQ$ has orthonormal columns}
  \tcp{Solve core matrix via least squares and EIGH}
  $(\bar{\mU}, \bar{\mLambda}, \bar{\mV}^*) \leftarrow \svd(\mM_I)$\\
  $\mC_L \leftarrow (\mUpsilon^* \mQ)^\dagger \bar{\mU}$\\
  $\mC_R \leftarrow (\mOmega^* \mQ)^\dagger \bar{\mV}$\\
  $\mC \leftarrow \mC_L \bar{\mLambda} \mC_R^*$ \tcp*{$\mC \in \sC^{n_{\smallerO} \times n_{\smallerO}}$ is Hermitian}
  $(\mU, \mLambda) \leftarrow \eigh(\mC)$ \tcp*{$\mLambda$ are the eigenvalues}
  \Return{$(\mQ, \mU, \mLambda)$}
  \caption{\textsc{SEIGH}\\[-8.92pt] }
  \label{alg:seigh}
\end{algorithm}
}

\end{minipage}

\textbf{Leveraging Symmetry:}
While previous works studied Hermitian extensions \citep{rmt_svd,psd_sparsity_sketch,tropp_nystrom}, none of them proposes a single-pass sketched eigendecomposition via oversampled and uncorrelated inner matrix.
In this work, we propose \textsc{SEIGH} (\Cref{alg:seigh}), a variant of \textsc{SSVD} that retains core oversampling while leveraging conjugate symmetry to drastically reduce the required measurements and memory.
Without loss of generality \citep{rmt_svd}, we assume $\mQ\!=\!\mP$, so only $\mQ$ needs to be computed via $\mA \mOmega$ outer measurements. Then, $\mC$ is also Hermitian and can be eigendecomposed:
\begin{align*}
  \mA \approx \mQ (\mQ^* \mA \mQ) \mQ^* 
  \!\approx\! \mQ (\mUpsilon_I^* \mQ)^\dagger\!\underbrace{\mUpsilon_I^* \mA \mOmega_I}_{\mM_I = \bar{\mU} \bar{\mLambda} \bar{\mV}^*} \! \big[ (\mOmega_I^* \mQ)^\dagger \big]^* \mQ^*
  \!=\! \mQ \underbrace{(\mUpsilon_I^* \mQ)^\dagger \bar{\mU} \bar{\mLambda} \bar{\mV}^* \big[ (\mOmega_I^* \mQ)^\dagger \big]^*}_{\mC = \mC^* = \mU \mLambda \mU^*~~(\eigh)}\!\mQ^*
\end{align*}
Setting $\mUpsilon_I\!=\!\mOmega_I$ would further simplify the structure, saving one pseudoinverse, but there is one caveat: We also wish to recycle part of the $\mA \mOmega_I$ inner measurements to obtain $\mQ$, since this is a major bottleneck for the Hessian. But recycling measurements and setting $\mUpsilon_I\!=\!\mOmega_I$ would lead to correlated measurements between $\mQ$ and $\mM_I$ which, as mentioned, has negative impact in the quality of this procedure \citep[][2.8.1]{ssvd19}. We opt to recycle the measurements, while keeping the uncorrelated $\mUpsilon_I\!=\!\mOmega_I$ and the two pseudoinverses. This way, only $n_{\smallerI}$ measurements are needed, memory is roughly halved, and arithmetic is reduced by one QR decomposition and increased by one inner SVD.  With \textsc{SEIGH}, we are now able to approximate $\lVert \mV \rVert_F^2$ for large Hessians, leading to the \emph{sketched} $\overlap$ approximation.

\section{Magnitude Masks and Hessian Eigenspaces Overlap Substantially}
\label{sec:exp}




We now investigate the similarity between the space spanned by the parameter magnitude masks and the top Hessian eigenspaces over the course of neural network training.
We first study a small-scale toy problem (\Cref{sec:exp_tinymnist}), where all involved quantities can still be computed exactly.
This allows us to verify the existence of early crystallization, as well as the quality of our sketched $\overlap$ approximation.
We observe an $\overlap$ consistently above random chance.
We then move onto larger problems (up to $\paramsdim\!>\!\num{11}$M and $\sparsedim\!=\!\num{1500}$), where we also confirm that the similarity between parameter magnitude masks and top Hessian eigenspaces is orders of magnitude above random chance (\Cref{sec:exp_largescale}), and that this effect is accentuated with network size.
We report $\overlap$ results up to step $\num{8000}$: since larger problems take longer to train, this covers different stages of training, but all of them reach well past initialization (see \Cref{table:hpars} for detailed accuracy at terminal steps).
This provides us with a sufficient and convenient range to compare the observed high $\overlap$ across all problems.

\subsection{Exact Computations on $16\!\times\!16$ \mnist}
\label{sec:exp_tinymnist}

\begin{table}[!htb]
  \centering
  \caption{\textbf{Overview of experimental settings}, detailing number of model parameters ($\paramsdim$), learning rate ($\eta$), batch size ($B$), steps per epoch ($T$), test accuracy ($\acc$) at step $t$, number of train/test samples used to compute $H_{train}$/$H_{test}$ ($N_{train}/N_{test}$ respectively), and number of \textsc{SEIGH} outer measurements  ($n_{\smallerO}$, see \Cref{alg:seigh})}
  \label{table:hpars}
  \resizebox{1\textwidth}{!}{%
    \begin{tabular}{llrlcrccl}
      \toprule
      \textbf{Problem}    &    Model                                   & $\paramsdim$     & $\eta $  & $B$  & $T$     & $\acc$                    & $N_{train}/N_{test}$     & $n_{\smallerO}$  \\ \midrule
      \textbf{$16\!\times\!16$ \mnist}   & $\tanh$-MLP \cite{Martens2015} & $\num{7030}$   & 0.3   & 500  & 100       & 95.78\% ($t\!=\!1000$)    & 500/500               & 355\\
      \textbf{\cifarten}   & \textsc{3c3d}-CNN \cite{Schneider2019}     & $\num{895210}$   & 0.0226   & 128  & 312    & 74.52\% ($t\!=\!8000$)                   & 500/500               & 1000\\
      \textbf{\cifarhun}   & \textsc{All-CNN-C} \cite{Springenberg2015} & $\num{1387108}$  & 0.1658   & 256  & 156    & 40.50\% ($t\!=\!8000$)                    & ~n.a./1000            & 1000\\
      \textbf{\imagenet}   & \resneteighteen \cite{resnet}              & $\num{11689512}$ & 0.1      & 150  & 8207   & 17.33\% ($t\!=\!8000$)                   & ~n.a./5000            & 1500\\
      \bottomrule
    \end{tabular}
  }
\end{table}

For our toy problem, we seek a setup that is overparametrized enough to exhibit mask crystallization, but small enough so that full Hessian eigendecompositions can be computed.
This is achieved by the setup from \cite{Martens2015}, which is able to achieve zero training loss on downsampled MNIST digit classification with only 7030 parameters.
We train using \ac{SGD} as detailed in \Cref{table:hpars}, reaching a test accuracy of 95.78\% after 10 epochs (\Cref{fig:collapse}).
To compute the training and test Hessians, we use a fixed set of $\num{500}$ random samples ($\num{50}$ per class) from each respective data split.
At every fifth training step $t_i$, we compute the sparsity ratio $\kappa$ (defined in \Cref{sec:background_pruning}) for the parameters $\params$ and the Hessian spectrum $\mLambda$, observing that \emph{both subspaces experience early collapse} (\Cref{fig:collapse}). Then, for pairs of steps $(t_i,\!t_j)$, we compute the \Ac{IoU} between successive parameter magnitude masks, as well as the overlap between successive Hessian eigenspaces.
We observe that \emph{both subspaces also experience early crystallization} (\Cref{fig:stabilization}).
We reiterate that we merely \emph{inspect} the parameter magnitude masks, but we never \emph{apply} them, i.e., we never prune the network.
Once we established our desired scenario, we compute several Grassmannian metrics between the spans of parameter magnitude masks and top Hessian eigenspaces (\Cref{fig:grassmannians}). All non-collapsing metrics report that \emph{both subspaces show a substantial and consistent similarity}. In particular, our sketched $\overlap$ approximation closely tracks the exact $\overlap$ (\Cref{fig:highlight_figure}, left), which is largest shortly after initialization and then decays, but still stabilizes well above random chance.

\subsection{Sketched $\overlap$ on larger problems (\cifarten, \cifarhun and \imagenet)}
\label{sec:exp_largescale}

\begin{figure*}[ht]
  \includegraphics[width=1\textwidth]{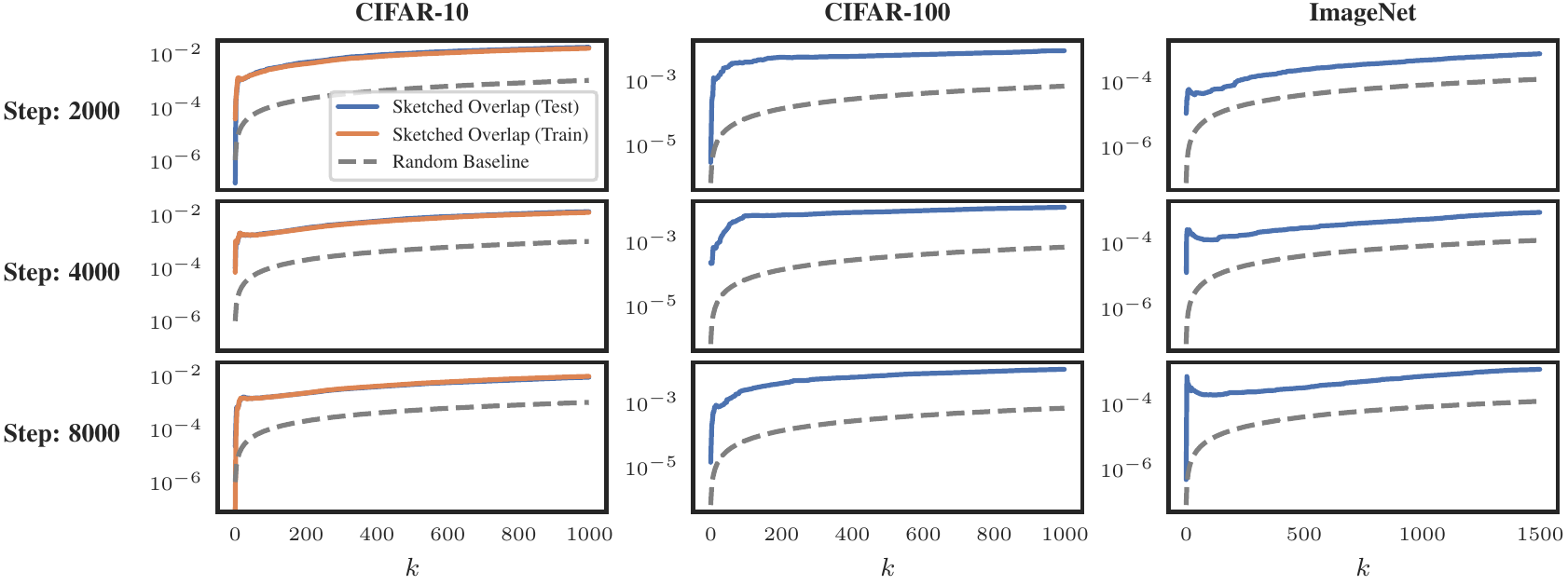}
  \caption{\textbf{Overlap between parameter magnitude masks and top Hessian eigenspaces as a function of $\sparsedim$ at three different points during the training process.}
    Shown is the sketched $\overlap$ between the mask of the largest $\sparsedim$ parameters by magnitude and the top-$\sparsedim$ Hessian eigenspace. Consistently across the training process and for all problems and values of $\sparsedim$, we observe an $\overlap$ substantially larger than the random chance baseline (\dashedcolorline{gray}) introduced in \Cref{sec:metrics_comparison}.
  }
  \label{fig:overlap_by_k}
\end{figure*}

We aim now to verify our discovery of substantial overlap in larger setups. Using the \deepobs framework \citep{Schneider2019} and \ac{SGD}, we train larger models on \cifars \cite{Krizhevsky2009} and on \imagenet \cite{imagenet} (see \Cref{table:hpars} for details). At steps $\{2000, 4000, 8000\}$, we gather the parameters and compute the top Hessian eigenspaces using \textsc{SEIGH}, which allows us to compute the sketched $\overlap$ for every $\sparsedim \in \{1, \tightDots, n_{\smallerO}\}$. To compute the Hessians, we choose $N_{train}/N_{test}$ to have several (balanced) samples per class, and $n_{\smallerO}$ to be substantially larger than the number of classes, which has been shown to be linked to the top eigenspace dimensionality \cite[\eg][]{GurAri2018,papyan_traces}. We elaborate on these choices and report runtimes in \Cref{app:runtimes}. Note that training \cifarhun and \imagenet relies on noisy data augmentations, which lead to nondeterministic behaviour in $\hessian_{train}$. For this reason, we compute their $\overlap$ for $\hessian_{test}$ only. This choice is supported by the \cifarten results, showing minimal difference in $\overlap$ for either $\hessian_{train}$ or $\hessian_{test}$.

We observe that the measured $\overlap$ is \emph{significant for all problems, steps and dimensions} (\Cref{fig:overlap_by_k} and \Cref{fig:overlap_by_t}, left), confirming the findings from our $16\!\times\!16$ \mnist toy problem. Furthermore, \emph{this effect is accentuated by model size, surpassing a factor of $\num{10}^3$ times the baseline}  for the \resneteighteen on \imagenet (\Cref{fig:overlap_by_t}, right). The observed sketched $\overlap$ also \emph{raises in the very early steps}, perhaps linked to the crystallization covered in \Cref{sec:background}. We also note that, while the precision of sketched decompositions is reported to decay sharply for eigenpairs close to $n_{\smallerO}$ \citep[][7.9]{ssvd19}, this does not affect our observation, since the obtained $\overlap$ is \emph{particularly high for the lower regimes of $\sparsedim$}, far from $n_{\smallerO}$ and where the method is most reliable.

\begin{figure}[t]
        \centering
	\includegraphics[width=0.8\columnwidth]{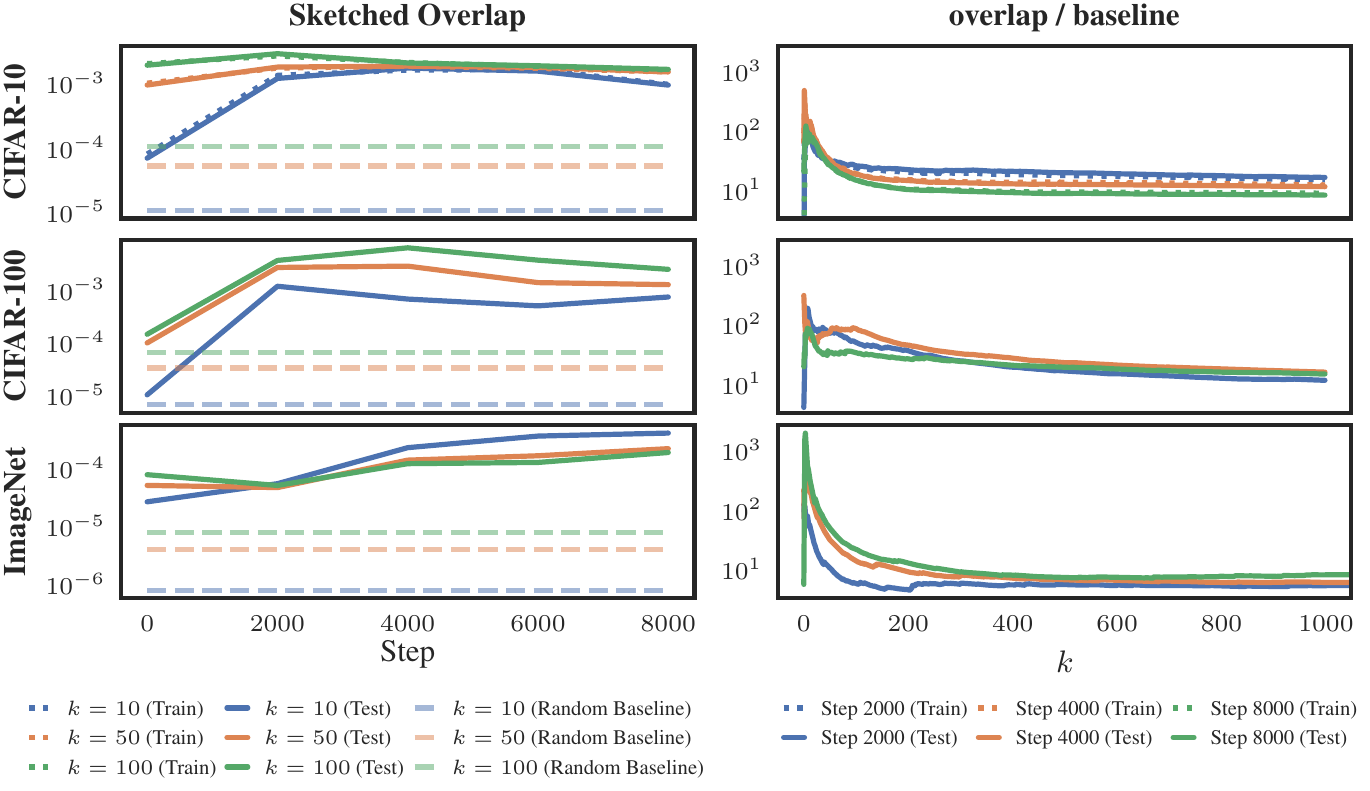}
	\caption{
          \emph{(Left)} \textbf{Overlap between parameter magnitude masks and top Hessian eigenspaces as a function of training step.}
	  Shown is the sketched $\overlap$ between masks of $\sparsedim$-largest parameters by magnitude and top-$\sparsedim$ Hessian eigenspaces, for different values of $\sparsedim$ at different training steps. 
	  The observed $\overlap$ raises early on, and is consistently and substantially above random chance baseline (\dashedcolorline{black}, introduced in \Cref{sec:metrics_comparison}).
          \emph{(Right)} \textbf{Factor by which the measured sketched $\overlap$ is greater than the random baseline.}
	  For most of the measured training process and choices of $\sparsedim$, the observed $\overlap$ is at least $10\times$ larger than random chance baseline.
	  This multiple factor over the random baseline is largest for small $\sparsedim$ and increases with network size, surpassing $10^3$.
	}
	\label{fig:overlap_by_t}
\end{figure}

\section{Conclusion}
\label{sec:conclusion}


We started with the observation that, at the early stages of neural network training, both parameter magnitude masks and loss Hessian eigenspaces collapse and crystallize.
To connect these two phenomena throughout the training process, we proposed a principled methodology that compares these two fundamental deep learning quantities using Grassmannian metrics, and identified $\overlap$ as a particularly advantageous one.
We further developed \textsc{SEIGH}, a matrix-free sketched eigendecomposition that works for Hessians of over $\num{10}^7$ parameters, allowing us to approximate $\overlap$.
Our experiments reveal an $\overlap$ between parameter magnitude masks and top Hessian eigenspaces well above chance level for all observed problems, training steps and dimensionalities, indicating that, in \ac{DL}, \emph{top Hessian eigenvectors tend to be concentrated around larger parameters}, or equivalently, that \emph{large parameters tend to align with directions of large loss curvature}.

While the obtained $\overlap$ may be considered small in absolute terms, the fact that it is orders of magnitude above chance could play a crucial role at large scales, analogously to how mini-batch gradients produce very noisy fluctuations in the short term, but lead to qualitative performance and generalization over larger training spans.
The relevance of this connection could also bear potential for downstream tasks where the Hessian plays a prominent role (e.g. by approximating expensive Hessian quantities via cheap parameter inspection).

One main limitation of this method is that it still does not scale up to contemporary network sizes (see \Cref{app:runtimes}).
In this sense, we emphasize that our methodology is primarily meant as an \emph{analysis} technique to understand and leverage the connection between parameters and loss curvature, and not necessarily part of the downstream applications themselves.
In \Cref{app:beyond_perturbation}, we have also explored alternative approaches that show potential as more scalable alternatives to computing $\overlap$.

\subsubsection*{Acknowledgments}

The authors thank Joel A. Tropp for background on the distance between random subspaces.
AF was supported by the DFG SPP 2298 (Project HE 7114/5-1).
FS was supported by funds from the Cyber Valley Research Fund of the State of Baden-Württemberg.
Further, the authors gratefully acknowledge financial support by the European Research Council through ERC CoG Action 101123955 ANUBIS; the DFG Cluster of Excellence "Machine Learning - New Perspectives for Science", EXC 2064/1, project number 390727645; the German Federal Ministry of Education and Research (BMBF) through the Tübingen AI Center (FKZ: 01IS18039A); and the Carl Zeiss Foundation, (project "Certification and Foundations of Safe Machine Learning Systems in Healthcare"), as well as funds from the Ministry of Science, Research and Arts of the State of Baden-Württemberg.

\bibliography{references}
\bibliographystyle{tmlr}

\appendix


\section{Supplementary Material for Grassmannian Metrics}
\label{app:grassmann}

\newcommand{\synthFigWidth}{0.9\textwidth}
\newcommand{\perturbFigWidth}{0.9\textwidth}


\subsection{Perturbing the Parameters to Measure their Importance}
\label{app:perturbation_study}

In order to prove that a certain subset of parameters is ``sensitive'' without using the theoretical framework we propose, one may consider a perturbation study where large parameters are perturbed, and the resulting loss variation is measured.
More variation would then relate to more sensitivity.
Here we show that, in general, such an experiment does not yield an informative quantity towards this goal (unlike our proposed Grassmannians), due to the non-PSD nature of $\hessian$, (\ie negative and positive directions of curvature may cancel out for arbitrary sets of parameters) and the lack of details about the loss landscape regularity.

At any parameter vector $\params\!\in\!\sR^{\paramsdim}$, and under additive perturbations $\vdelta\!\in\!\sR^{\paramsdim}$, the loss can be expressed via its Taylor expansion (for gradient $\grad$, Hessian $\hessian$ and Lagrange remainder $R_3$):
\begin{align}
  \label{eq:perturbation_taylor}
    \Loss (\params + \vdelta)  =  \Loss(\params) + \grad\T \vdelta + \frac{1}{2} \vdelta\T \hessian \vdelta + R_3(\vdelta)
\end{align}
Then, given a parameter mask $\vm\!\in\!\sB^{\paramsdim}$, the proposed perturbation study would consist in drawing random perturbations $\vdelta$ from any i.i.d. distribution $\mathcal{N}$ with zero mean $\mu$ and unit variance, and then mask said perturbations via $\vdelta_{\vm}\!=\!(\vm \pdot \vdelta)$ and add them to $\params$ in order to estimate the expectation:
\begin{align*}
  \E[\mathcal{N}]{\Loss(\params + \vdelta_{\vm})} = \Loss(\params) + \underbrace{\grad\T \E[\mathcal{N}]{\vdelta_{\vm}} + \frac{1}{2} \langle \hessian, \E[\mathcal{N}]{\vdelta_{\vm} \vdelta_{\vm}\T}  \rangle + \E[\mathcal{N}]{R_3(\vdelta_{\vm})}}_{\varepsilon}
\end{align*}
This way, we are perturbing only the selected parameters in $\vm$, and measuring their impact on the (known) loss $\Loss(\params)$:
If $\lvert \varepsilon \rvert$ is larger, this can be interpreted as the subset $\xi$ being more ``sensitive".

Since we assumed $\vmu_{\vm} \defEq \E[\mathcal{N}]{\vdelta_{\vm}} = \num{0}$ (otherwise reparametrize $(\params + \vdelta_{\vm})\!\eqDef\! (\params + \vmu_{\vm}) + (\vdelta_{\vm} - \vmu_{\vm})$ such that the reparametrized mean of the perturbations will be $\num{0}$), and we also assumed a (masked) unit covariance $\Cov{\vdelta_{\vm}} = \mI_{\vm}$, we have:
\begin{align*}
    \Cov{\vdelta_{\vm}} \defEq \E[\mathcal{N}]{(\vdelta_{\vm} - \vmu_{\vm}) (\vdelta_{\vm} - \vmu_{\vm})\T} = \E[\mathcal{N}]{\vdelta_{\vm} \vdelta_{\vm}\T} = \mI_{\vm}
\end{align*}
Then, the expectation of the perturbed loss simplifies:
\begin{align*}
  \E[\mathcal{N}]{\Loss(\params + \vdelta_{\vm})} &= \Loss(\params) + \underbrace{\grad\T \E[\mathcal{N}]{\vdelta_{\vm}}}_{0} + \frac{1}{2} \langle \hessian, \underbrace{\E[\mathcal{N}]{\vdelta_{\vm} \vdelta_{\vm}\T}}_{\mI_{\vm}}  \rangle + \E[\mathcal{N}]{R_3(\vdelta_{\vm})}\\
  &= \Loss(\params) + \underbrace{\frac{1}{2} \Tr_{\vm}(\hessian) + \E[\mathcal{N}]{R_3(\vdelta_{\vm})}}_{\varepsilon}
\end{align*}
We see now how, in general, this technique cannot be relied upon: since $\hessian$ is non-PSD in general, any of its subtraces could cancel $R_3$, leading to $\varepsilon = 0$.
Therefore, this procedure is not guaranteed to be informative in measuring the impact of parameter subsets on the loss curvature.
In contrast, our proposed method is guaranteed to yield larger similarities whenever the selected parameters in $\vm$ tend to align with directions of larger loss curvature, as shown in the paper.

Alternatively, one could consider to sample perturbations for parameters \emph{one by one}, i.e. by setting $\vm$ to equal $\num{1}$ at exactly $m_i$, and $\num{0}$ otherwise. Then, the expectation of the perturbed loss becomes:
\begin{align}
  \label{eq:masked_perturbation}
  \E[\mathcal{N}]{\Loss(\params + \vdelta_{\vm})} = \Loss(\params) + \underbrace{\frac{1}{2} \hessian_{ii} + \E[\mathcal{N}]{R_3(\vdelta_{\vm})}}_{\varepsilon_i}
\end{align}
The idea here is that the aggregated perturbation $\varepsilon' = \sum_i \lvert \varepsilon_i \rvert$ is now more likely to be informative, since positive and negative diagonal entries don't cancel anymore.
This still entails a problematic trade-off that favours our Grassmannian characterization, for the following reasons:
(1) In \ac{DL}, the loss landscape may be far from convex or regular \cite[\eg][Fig.~1]{visualizing_loss}, and to interpret $\varepsilon'$ meaningfully we would still require some characterization of $R_3$ similar in spirit to the theoretically founded Grasmannian that we use.
(2) This method requires to estimate one expectation per scalar parameter. For a set of $\sparsedim$ parameters and $n$ samples per expectation, this means performing $\mathcal{O}(n \cdot k)$ forward propagations over a given dataset.
In contrast, our method is theoretically grounded and our proposed sketched method requires just $\mathcal{O}(k)$ \acp{HVP} (see \Cref{sec:sketchy_overlap} and \citet[][4.2]{rmt_svd} for more discussion).

\subsection{Beyond Perturbing the Parameters to Measure their Importance}
\label{app:beyond_perturbation}

Despite the various sources of error and computational issues related to parameter perturbations (see \Cref{app:perturbation_study}), one may still feel compelled to explore related strategies, seeking a more efficient and conceptually simple alternative to our proposed Grassmannian methodology.
In this section, we introduce three such alternatives which, despite of some issues, constitute a promising line of investigation. We discuss here some of their main aspects in terms of model error, computation and empirical behaviour.

\textbf{Nonnegative Gauss-Newton Subtraces:}
A main issue with $\hessian$ is its non-PSD nature. Instead, the idea here is to target the Gauss-Newton matrix $\mG$, which is a PSD approximation to $\hessian$ \citep{schraudolph2002}.
Since PSD matrices have nonnegative diagonal entries, $\mG_{ii}$ could serve as a proxy measurement, denoting parameter sensitivity for $\params_i$.
In that case, a larger trace for a given subset of indices could mean that the corresponding parameter space is associated with larger loss curvature.

Instead of casting this as a parameter perturbation study, which would be exposed to the $R_3$ approximation error as well as the impossibility of disentangling $\mG$ from $\hessian$ (see \Cref{app:perturbation_study}), we propose here to \emph{measure the diagonal entries of $\mG$ one by one using \acp{GNVP}}. Since we are just interested in the entries corresponding to the $k$ largest parameters by magnitude, this just requires $\bigO(D)$ memory and $\bigO(k)$ \acp{GNVP}, thus being fully parallelizable and substantially faster than computing $\overlap$.
Furthermore, \acp{GNVP} are roughly twice as fast as \acp{HVP}, and require half the memory \citep[][4.2]{martens2010hf}.

Once we have obtained the top-$k$ diagonal entries (\ie the diagonal entries of $\mG$ corresponding to the $k$-largest parameters), we need a way to compare them with $\overlap$. For this, we propose to compute the following \emph{normalized trace summation}:
\begin{align}
  \label{eq:trace_feature}
  \xi_k \defEq \frac{1}{\Tr(\mG)} \sum_{i \in \{i_1 \tightDots i_k\}} \mG_{i i} \quad \in [0, 1]
\end{align}
Where the $i_1, \tightDots, i_k$ indices run over the parameters by descending magnitude.
We can see that $\xi$ increases monotonically with $k$, reaching $\xi_D\!=\!1$.
Furthermore, we define the baseline corresponding to a uniformly distributed trace as $\bar{\xi}_k \!\defEq\! \sfrac{k}{D}$.
This baseline behaves analogously to our \emph{``chance-level''} $\overlap$ baseline from \Cref{sec:metrics_comparison}: if $\xi_k \!>\! \bar{\xi}_k$, the top-$k$ parameters are associated with an above-chance trace (which, as already mentioned, we treat as a proxy for curvature in the case of $\mG$).
See \Cref{fig:perturb_train,fig:perturb_test} for an illustration.

\textbf{Nonnegative Squared-Hessian Subtraces:}
The rationale and computations involved here are identical as for the previously described $\mG$, but using $\mH^2$ instead.
This is viable, since $\hessian^2$ is also PSD and $[\hessian^2]_{ii}$ can be efficiently and exactly computed with a single \ac{HVP} as follows:
\begin{align*}
  [\hessian^2]_{ii} = \ve_i\T \hessian^2 \ve_i = \lVert \hessian \ve_i \rVert_2^2
\end{align*}
This technique is also substantially faster than $\overlap$, requiring only $\bigO(D)$ memory and $\bigO(k)$ parallelizable \acp{HVP}.
Although we do not assume $\hessian^2 \!\approx\! \hessian$, this experiment aims to explore whether subtraces of $\hessian^2$ can also be treated as proxy for curvature, in the same fashion as previously discussed for $\mG$.

\textbf{Adversarial Perturbations:}
Recall from \Cref{app:perturbation_study} and \Cref{eq:masked_perturbation} that a random perturbation study would require $\mathcal{O}(n \cdot k)$ forward propoagations, for potentially large $n$.
Instead of random sampling, the idea here is to \emph{use a single ``optimal'' perturbation, such that curvature is maximized in that direction}. This is precisely the idea behind \ac{SAM} \citep{foret2021sam}, where the perturbation with constrained radius $\lambda \in \R_+$ such that the loss $\Loss (\params + \lambda \epsilon^*)$ is maximized, has the approximate closed-form solution $\epsilon^* = \sfrac{\grad}{\lVert \grad \rVert_2}$.
In our setting from \Cref{eq:masked_perturbation}, we want to evaluate the following \emph{masked loss delta}:
\begin{align*}
  \Delta_{\lambda,i} = | \Loss(\params) - \Loss (\params + \lambda (\ve_i \pdot \epsilon^*)) |
\end{align*}
where the $i$ index covers each of the top-$k$ parameters by magnitude.
This doesn't rely on PSD-ness anymore, since $\Delta$ comprises absolute values for individual parameters.
It is also very fast to compute, since we just need $\mathcal{O}(D)$ memory, one forward- and backpropagation to obtain $\Loss(\params)$ and $\epsilon$, plus one additional forward propagation per $i$.
Once we have the \emph{deltas}, we can define a normalized summation, analogous to \Cref{eq:trace_feature}, that can be used to compare against $\bar{\xi}_k$ (see \Cref{fig:perturb_train,fig:perturb_test}):
\begin{align}
  \label{eq:pert_feature}
  \zeta_{\lambda, k} =  \defEq \frac{1}{\sum_{i=1}^D \Delta_{\lambda,i} } ~~\sum_{i \in \{i_1 \tightDots i_k \}} \Delta_{\lambda,i}  \quad \in [0, 1]
\end{align}

\vspace{1em}
\noindent
\begin{minipage}{0.5\textwidth}
  \textbf{Experimental results:}
So far, we introduced three measurement techniques that are more efficient than $\overlap$.
We now ask:
\emph{Can they be used as a reliable proxy for parameter sensitivity? Do they yield measurements above baseline? How do they relate to overlap?}
To answer this, we gathered the above metrics for steps \num{0}, \num{100} and \num{1000} of the \mnist setup described in \Cref{sec:exp_tinymnist}.
The steps were chosen to cover initialization, mid-training and convergence, as illustrated in \Cref{fig:perturb_acc}.
Results can be found in \Cref{fig:perturb_train,fig:perturb_test}, together with the corresponding $\overlap$ and uniform baseline $\bar{\xi}$ for comparison. We chose 3 different values of $\lambda$ to cover different perturbation scales.\\

\end{minipage}
\hfill
\begin{minipage}{0.46\textwidth}
  \vspace{-1em}
    \centering
    \includegraphics[width=\linewidth]{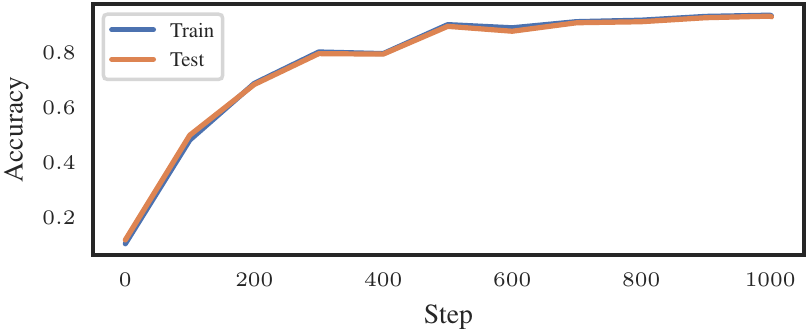}
    \captionof{figure}{
      Training evolution of the model used to report \Cref{fig:perturb_train,fig:perturb_test,fig:rand_mask_baseline}.
      It corresponds to the $\num{7030}$-parameter model trained on $16\!\times\!16$ downsampled \mnist, as described in \Cref{sec:exp_tinymnist}.
    }
    \label{fig:perturb_acc}
\end{minipage}

Comparing both figures, we first observe that training and test data look quite similar throughout the whole training, which is consistent with the small gap between training and test accuracies from \Cref{fig:perturb_acc}.
Crucially, we observe that, like $\overlap$,  \emph{all 3 proxy measurements tend to be well above chance-level baseline in most of their domain}, supporting their potential use as scalable replacements for $\overlap$, and suggesting a possible connection to the Hessian eigenspace.
This is promising, but it comes with the caveat of \emph{more unstable behaviour, both across $k$}---all proxies tend to be below baseline for small $k$---, \emph{and across training step}---they can go well above $\overlap$ early on, but then they steeply collapse towards baseline.
The instability of these overlap proxies, together with their approximate nature, makes them a challenging but potentially rewarding target for future research.

\begin{figure*}[!htb]
  \centering
  \includegraphics[width=\perturbFigWidth]{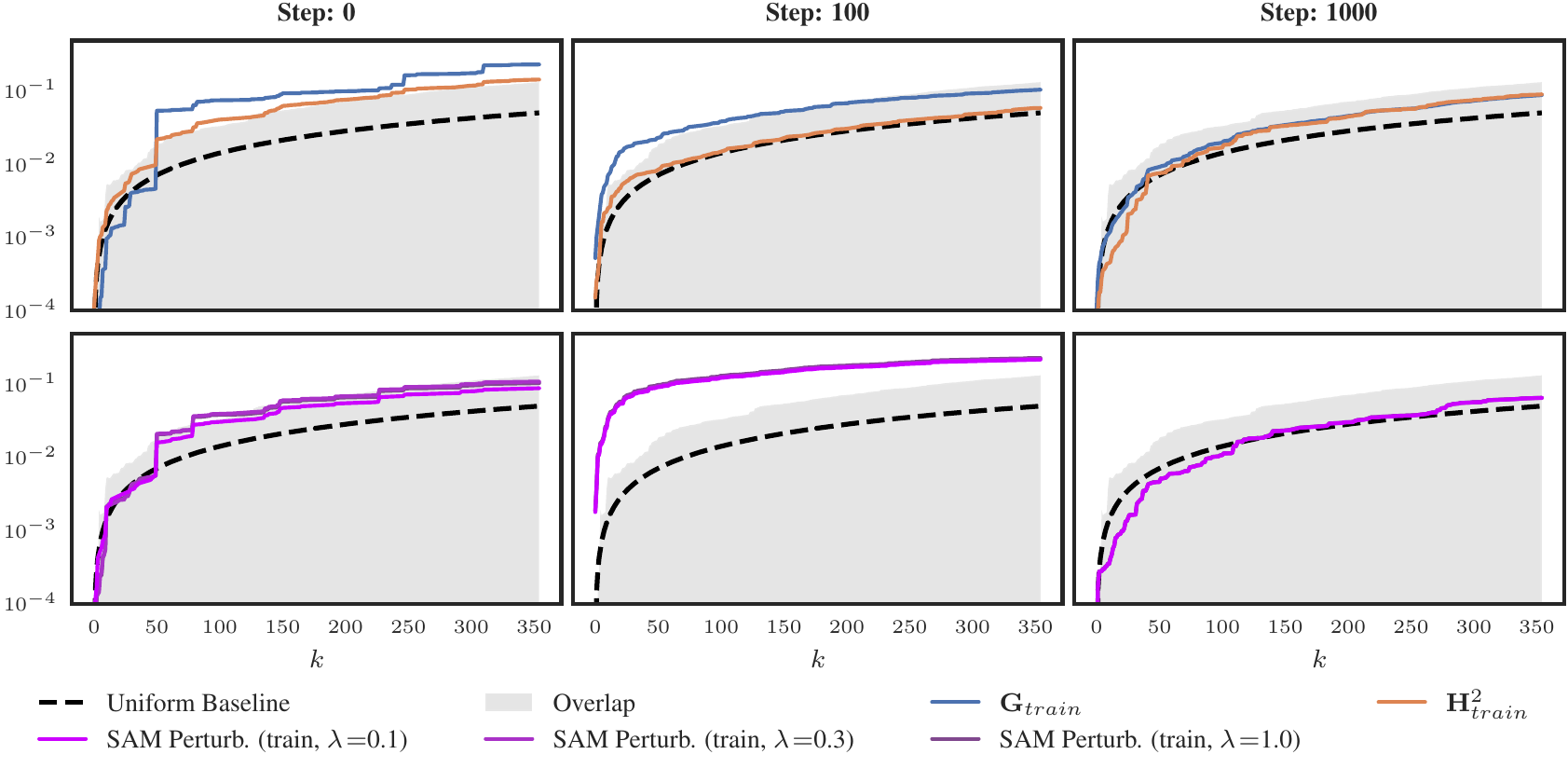}
  \caption{\textbf{Proxy $\overlap$ measurements, training split:}
    \emph{(top row)} The $\mG$ \emph{normalized trace summation} feature from \Cref{eq:trace_feature}, as well as the corresponding $\hessian^2$ normalized trace summation.
    \emph{(bottom row)} The \ac{SAM} perturbation feature from \Cref{eq:pert_feature}, obtained for three different values of $\lambda$, to cover different perturbation scales.
    \emph{(both rows)} All features are compared to the uniform baseline $\sfrac{k}{D}$ and the $\overlap$ associated to the $k$-largest parameters by magnitude.
    See \Cref{fig:perturb_acc} for respective accuracies at given steps.
  }
\label{fig:perturb_train}
\end{figure*}

\begin{figure*}[!htb]
  \centering
  \includegraphics[width=\perturbFigWidth]{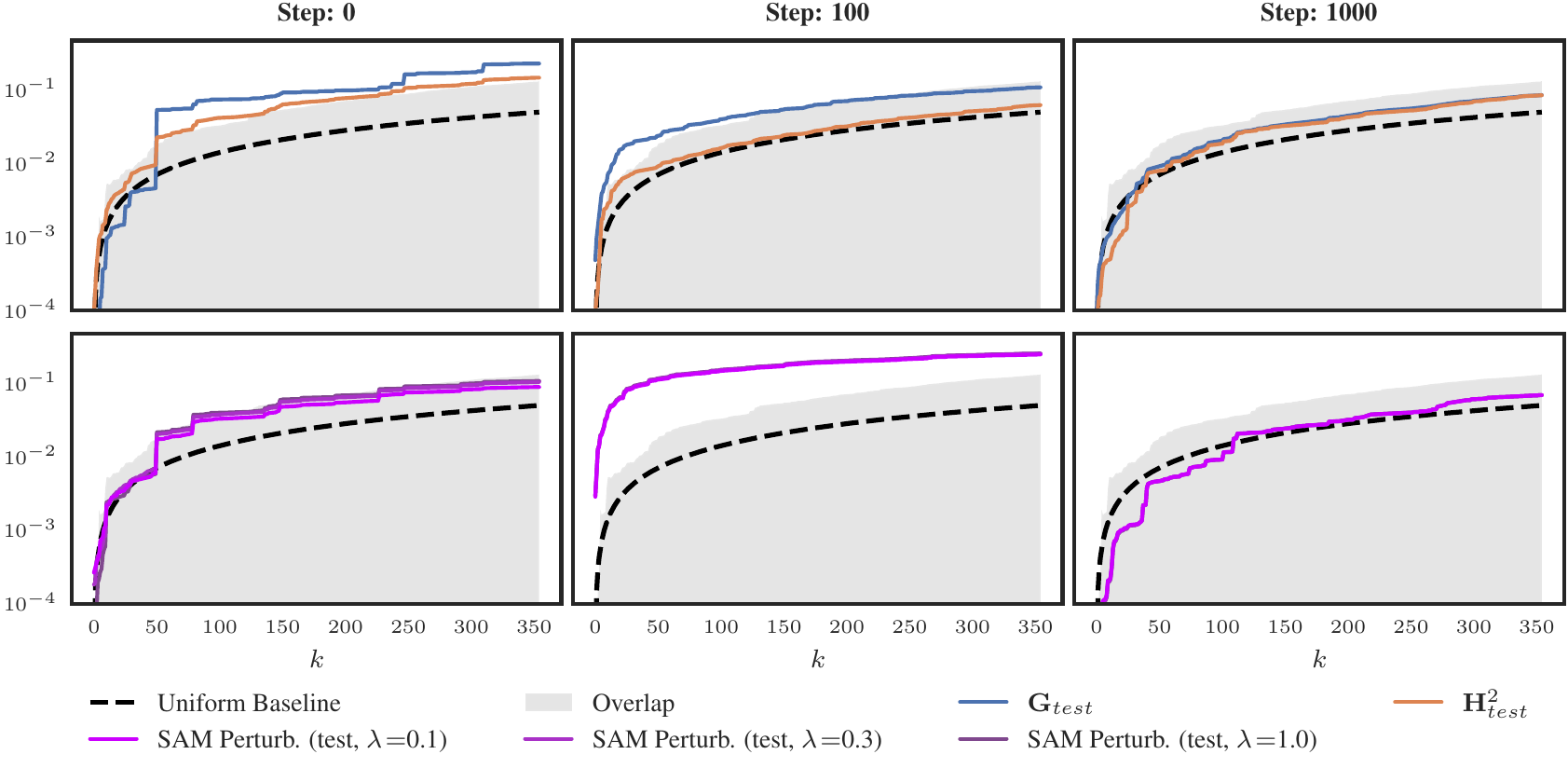}
  \caption{\textbf{Proxy $\overlap$ measurements, test split:}
    \emph{(top row)} The $\mG$ \emph{normalized trace summation} feature from \Cref{eq:trace_feature}, as well as the corresponding $\hessian^2$ normalized trace summation.
    \emph{(bottom row)} The \ac{SAM} perturbation feature from \Cref{eq:pert_feature}, obtained for three different values of $\lambda$, to cover different perturbation scales.
    \emph{(both rows)} All features are compared to the uniform baseline $\sfrac{k}{D}$ and the $\overlap$ associated to the $k$-largest parameters by magnitude.
    See \Cref{fig:perturb_acc} for respective accuracies at given steps.
  }
\label{fig:perturb_test}
\end{figure*}

%
\subsection{Main Properties of Grassmannian Metrics}
\label{app:grassmann_overview}

A desirable property of Grassmann manifolds is the availability of closed-form expressions for their \emph{geodesics} (\ie the shortest paths between any two elements $\gFra_i, \gFra_j \in \grassmannian$) and metric functions. We can thus measure distances between subspaces in an interpretable and computationally amenable manner: geodesics from $\gFra_i$ to $\gFra_j$ follow circular trajectories, and therefore their \emph{distance} can be interpreted as the ``amount of rotation'' needed to go from one space to another.
Such rotations can be succinctly expressed in terms of \emph{principal angles} $\vsigma_{i\shortrightarrow j}\!\in\![0, \frac{\pi}{2}]^\sparsedim$, and they can be efficiently obtained from $\mQ \in \sO^{\paramsdim \times \sparsedim}$ matrices via their \ac{SVD} \citep[\eg][s. 4.3]{grass_eas98}:
\begin{align}
	\label{eq:principal_angles}
	\mQ_i\T\mQ_j &\eqDef \mL_{i\shortrightarrow j}  \diag \big(\cos(\vsigma_{i\shortrightarrow j}) \big) \mR_{i\shortrightarrow j}\T, \quad \mL, \mR \text{~orthogonal.}
\end{align}
Since $\mQ_i,\!\mQ_j$ have orthonormal columns, $\cos(\vsigma_{i\shortrightarrow j}) \in [0,\!1]^{\sparsedim}$ \citep[\eg][]{grass_angles}, and more similar spans will yield larger singular values, which translate to smaller rotations. Importantly, singular values are invariant under similarity:
\begin{align}
	\diag \big(\cos(\vsigma_{i\shortrightarrow j}) \big) &= \mL_{i\shortrightarrow j}\T \mQ_i\T\mQ_j \mR_{i\shortrightarrow j} \\
	&= \mL_{i\shortrightarrow j}^{\prime\top}(\mZ_i\T\mQ_i\T) (\mQ_j\mZ_j)\mR_{i\shortrightarrow j}'.
\end{align}
In other words, they are invariant under the action of any orthogonal $\mZ$, which means that $\vsigma$ does not change if we replace an input matrix with any other matrix from the same equivalence class (see definition for $\equivOrth_i$ in \Cref{sec:metrics_grassmann}). The family of functions that satisfy this invariance, plus the axioms of metric spaces, form the family of \emph{Grassmannian metrics} \citep[\eg][]{grass_unitary}, each capturing a different notion of distance between subspaces (largest principal angle, sum of principal angles\tightDots). See \Cref{sec:metrics_grassmann} for more details on specific metrics.

\subsection{Synthetic Experiment on Grassmannian Metrics}
\label{app:grass_synth}

Here we provide details about the synthetic experiment discussed in \Cref{sec:metrics_comparison}. We compute the reviewed Grassmannian metrics \textbf{a)} to \textbf{e)} from \Cref{sec:metrics_grassmann} between randomly drawn matrices from $\sO^{\paramsdim\!\times\!\sparsedim}$ and masks from $\sM^{\paramsdim\!\times\!\sparsedim}$.
We use the subgroup algorithm \citep[][]{diaconis_randorth} to sample matrices uniformly from $\sO^{\paramsdim\!\times\!\sparsedim}$ and column permutations of $\mI_{\paramsdim,\sparsedim}$ to sample uniformly from $\sM^{\paramsdim\!\times\!\sparsedim}$.
To determine if the metrics behave differently for masks than for general orthogonal matrices, we inspect three different \emph{modalities}: pairs of matrices ($\sO$-to-$\sO$, \Cref{fig:synth_D_OO,fig:synth_r_OO}), masks ($\sM$-to-$\sM$, \Cref{fig:synth_D_BB,fig:synth_r_BB}), and matrix-mask pairs ($\sO$-to-$\sM$, \Cref{fig:synth_D_OB,fig:synth_r_OB}).
We normalize all metrics, denoted by $\dist_{\overline{\ast}}$, to be in $[0,1]$, with $1$ indicating highest similarity (\ie smallest distance).
The overall procedure is gathered in \Cref{alg:synth}.

\begin{figure}[!htb]
  \centering
  \scalebox{.9}{
	\begin{minipage}{1\linewidth}  
          \newcommand{\algoMargin}{0em}
		\IncMargin{\algoMargin}
		\begin{algorithm}[H]
			\DontPrintSemicolon
			\KwIn{$\{D_1, D_2, \tightDots\}$ \tcp*{Matrix height ($D_i \in \sN$)}}
			\KwIn{$\{r_1, r_2, \tightDots\}$ \tcp*{Width-to-height ratio ($r_i \in [0, 1]$)}}
			\KwIn{$\{(\sO,\sO), (\sO,\sM), (\sM,\sM)\}$ \tcp*{Modality}}
			\KwIn{$\{\dist_{\overline{g}}, \dist_{\overline{c,\smallerTwo}}, \dist_{\overline{c,\smallerF}}, \dist_{\overline{p,\smallerTwo}}, \dist_{\overline{p,\smallerF}}, \dist_{\bar{a}}, \overlap \}$ \tcp*{Metric (normalized)}}
			\KwIn{$T$ \tcp*{Number of random samples}}
			$\gR \leftarrow \emptyset$ \tcp*{Result (a dictionary)}
			\For{$d \in \{D_1, D_2, \tightDots\}$ }{
				\For{$r \in \{r_1, r_2, \tightDots\}$ }{
					$k \leftarrow \max(1, \round(r \cdot d))$\\
					\For{$\dist \in \{\dist_{\overline{g}}, \dist_{\overline{c,\smallerTwo}}, \dist_{\overline{c,\smallerF}}, \dist_{\overline{p,\smallerTwo}}, \dist_{\overline{p,\smallerF}}, \dist_{\bar{a}}, \overlap \}$}{
						\For{$(\gM_1, \gM_2) \in \{(\sO,\sO),(\sO,\sM), (\sM,\sM) \}$ }{
							$\gH \leftarrow \emptyset$ \tcp*{Collection of samples}
							\For{$\{1, \dots, T\}$ }{
								$(\mQ_1, \mQ_2) \unifSample (\gM_1^{d \times k}, \gM_2^{d \times k})$
								$\gH \leftarrow \gH \cup \dist \big(\Span(\mQ_1), \Span(\mQ_2) \big)$
							}
							$\gR_{[d, r, \dist, \smallerCustom{$\gM_1$}, \smallerCustom{$\gM_2$}]} \leftarrow \gH$ \tcp*{Gather samples into result}
						}
					}
				}
			}
			\Return{$\gR$}
			\caption{Synthetic experiment on Grassmannian metrics (see \cref{sec:metrics_comparison} for details).}
			\label{alg:synth}
		\end{algorithm}
		\DecMargin{\algoMargin}
	\end{minipage}
        }
\end{figure}

We want to inspect how the distribution of the metrics changes as a function of sparsity $\sparsedim$ and sparsity ratio $\rho\!=\!\frac{k}{D}$.
First, given fixed values of $\rho$, we study the distribution of all Grassmannian metrics as a function of $\paramsdim$ (\Cref{fig:synth_D_OO,fig:synth_D_BB,fig:synth_D_OB}).
Additionally, given fixed values of $\paramsdim$ we study how the Grassmannian metrics change as a function of $\rho$ (\Cref{fig:synth_r_OO,fig:synth_r_BB,fig:synth_r_OB}).
We used the following values:
\begin{itemize}
	\item Number of random (matrix or mask) samples: $T = 50$
	\item For \Cref{fig:synth_D_OO,fig:synth_D_OB,fig:synth_D_BB} we investigate four different (fixed) ratios $\rho \in \{0.4, 0.2, 0.05, 0.01\}$ at several increasing dimensions $d \in \{16, 32, 64, 128, 256, 512, 1024, 2048\}$.
	\item For \Cref{fig:synth_r_OO,fig:synth_r_OB,fig:synth_r_BB} we investigated four (fixed) dimensions $d \in \{128, 256, 512, 1024\}$ at several increasing ratios $\rho \in \{0.005, 0.01, 0.05, 0.1, 0.33, 0.66, 0.9, 0.95, 0.99, 1\}$.
\end{itemize}
Note how, in the second case, we concentrate the sparsity ratios around the extremes. This is to better capture the behaviour of collapsing metrics, as discussed in \cref{sec:metrics_comparison}.

\subsection{Expectation of $\overlap$ for Uniformly Random Matrices}
\label{app:overlap_proof}

In \cref{sec:metrics_comparison} we empirically observed that the expectation under uniformly distributed random matrices becomes \emph{predictable} for all reviewed metrics, converging to a baseline value. Here we show that, for $\overlap$, such expectation equals exactly $\rho=\frac{\sparsedim}{\paramsdim}$. This lemma depends on a standard calculation that was communicated to us by Joel A. Tropp.

\vspace{1em}
\begin{lemma}
	Let $\mQ_1, \mQ_2$ be random matrices drawn uniformly from the Stiefel manifold $\sO^{\paramsdim \times \sparsedim} \defEq \{\mQ\!:\!\mQ\!\in\!\sR^{\paramsdim \times \sparsedim},\, \mQ\T \mQ\!=\!\mI_\sparsedim \}$. Then,
	\begin{align}
		\E{\overlap(\Span(Q_1), \Span(Q_2))} = \frac{\sparsedim}{\paramsdim}
	\end{align}
\end{lemma}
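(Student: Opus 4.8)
The plan is to reduce the statement to a symmetry computation about orthogonal projectors. For $\mQ\!\in\!\sO^{\paramsdim\times\sparsedim}$ write $\mPsi(\mQ)\!\defeq\!\mQ\mQ\T$ for the orthogonal projector onto the $\sparsedim$-dimensional subspace $\Span(\mQ)\!\subset\!\R^{\paramsdim}$. Using $\lVert\mM\rVert_F^2\!=\!\Tr(\mM\mM\T)$ with $\mM\!=\!\mQ_1\T\mQ_2$ together with cyclicity of the trace, one gets $\overlap(\Span(\mQ_1),\Span(\mQ_2))\!=\!\tfrac{1}{\sparsedim}\lVert\mQ_1\T\mQ_2\rVert_F^2\!=\!\tfrac{1}{\sparsedim}\Tr(\mQ_1\T\mQ_2\mQ_2\T\mQ_1)\!=\!\tfrac{1}{\sparsedim}\Tr(\mPsi_1\mPsi_2)$, where $\mPsi_i\!\defeq\!\mPsi(\mQ_i)$. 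Hence it suffices to show $\E{\Tr(\mPsi_1\mPsi_2)}\!=\!\sparsedim^2/\paramsdim$.

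First I would compute $\E{\mPsi}$ for a single uniformly (Haar-)random $\mQ\!\in\!\sO^{\paramsdim\times\sparsedim}$. The Haar law on $\sO^{\paramsdim\times\sparsedim}$ is invariant under left multiplication by any fixed orthogonal $\mO\!\in\!\R^{\paramsdim\times\paramsdim}$, so $\mPsi(\mO\mQ)\!=\!\mO\mPsi(\mQ)\mO\T$ has the same distribution as $\mPsi(\mQ)$, giving $\mO\,\E{\mPsi}\,\mO\T\!=\!\E{\mPsi}$ for all such $\mO$. Thus $\E{\mPsi}$ commutes with the whole orthogonal group, hence is a scalar multiple of $\mI_{\paramsdim}$; taking traces and using $\Tr(\mPsi)\!=\!\Tr(\mQ\T\mQ)\!=\!\sparsedim$ pins the scalar to $\sparsedim/\paramsdim$, i.e. $\E{\mPsi}\!=\!\tfrac{\sparsedim}{\paramsdim}\mI_{\paramsdim}$.

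Then I would condition on $\mQ_1$ and use the independence of $\mQ_1$ and $\mQ_2$: by linearity, $\E{\Tr(\mPsi_1\mPsi_2)\mid\mQ_1}\!=\!\Tr\!\big(\mPsi_1\,\E{\mPsi_2}\big)\!=\!\tfrac{\sparsedim}{\paramsdim}\Tr(\mPsi_1)\!=\!\tfrac{\sparsedim^2}{\paramsdim}$, which is constant; taking expectation over $\mQ_1$ and dividing by $\sparsedim$ yields $\E{\overlap}\!=\!\sparsedim/\paramsdim$. (An elementary variant avoids projectors: by left-invariance one may take $\mQ_1\!=\!\maskQ$, so $\lVert\mQ_1\T\mQ_2\rVert_F^2$ is the sum of squares of the first $\sparsedim$ rows of $\mQ_2$; row-exchangeability of the Haar law and the unit-norm constraint on each column of $\mQ_2$ force $\E{(\mQ_2)_{ij}^2}\!=\!1/\paramsdim$, and summing the $\sparsedim^2$ relevant entries gives $\sparsedim^2/\paramsdim$ again.)

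The only delicate point is the invariance/Schur step: that a matrix commuting with every orthogonal matrix must be a multiple of the identity (Schur's lemma for the irreducible defining action of $\mathrm{O}(\paramsdim)$ on $\R^{\paramsdim}$, or a direct argument with permutation and sign-flip matrices), and that the Haar construction on the Stiefel manifold has the stated left-invariance. Both are completely standard, so I expect no real obstacle here — consistent with the remark that the lemma rests on a standard calculation.
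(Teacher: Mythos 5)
Your proposal is correct and follows essentially the same route as the paper: reduce $\overlap$ to $\tfrac{1}{\sparsedim}\Tr(\mPsi_1\mPsi_2)$, establish $\E{\mPsi}=\tfrac{\sparsedim}{\paramsdim}\mI_{\paramsdim}$, then condition on $\mQ_1$ and use independence and $\Tr(\mPsi_1)=\sparsedim$. The only (cosmetic) difference is how the key identity $\E{\mPsi}=\tfrac{\sparsedim}{\paramsdim}\mI_{\paramsdim}$ is obtained—you argue via left-invariance of the Haar law and Schur's lemma, while the paper sums the isotropic second moments $\E{\vq_i\vq_i\T}=\tfrac{1}{\paramsdim}\mI_{\paramsdim}$ of the individual columns—both of which are standard and equally valid.
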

\begin{proof}
	We start by rewriting the definition of $\overlap$, presented in \cref{sec:metrics_grassmann}, in terms of the trace of orthogonal projectors $\mPsi = \mQ \mQ\T \in \sR^{\paramsdim \times \paramsdim}$. We make use of the idempotence of $\mPsi$ and the unitary invariance of the Frobenius norm:
	\begin{align}
		\overlap(\Span(\mQ_1), \Span(\mQ_2)) &= \frac{1}{\sparsedim} \lVert \mQ_1\T \mQ_2 \rVert_F^2 = \frac{1}{\sparsedim} \lVert \mPsi_1 \mPsi_2 \rVert_F^2 = \frac{1}{\sparsedim} \Tr(\mPsi_1 \mPsi_2^2 \mPsi_1)\\
		&= \frac{1}{\sparsedim} \Tr(\mPsi_1 \mPsi_2 \mPsi_1)
	\end{align}
	We further observe that, if $\mQ$ is drawn uniformly from $\sO^{\paramsdim \times \sparsedim}$, the marginal distribution of every column $\vq$ is the uniform distribution over the Euclidean unit sphere, hence it is isotropic:
	\begin{align}
		\E{\vq \vq\T} = \frac{1}{\paramsdim} \mI_{\paramsdim}
	\end{align}
	Where $I_D$ is the rank-$D$ identity matrix. Then, leveraging linearity of expectation, the orthogonal projector $\mPsi = \mQ \mQ\T$ can be expressed as follows:
	\begin{align}
		\E{\mPsi} = \sum_{i=1}^{\sparsedim} \E{\vq_i \vq_i\T} = \frac{\sparsedim}{\paramsdim} \mI_{\paramsdim}
	\end{align}
	Now, given two independent realizations $\mQ_1, \mQ_2$, we form the associated orthogonal projectors $\mPsi_1, \mPsi_2$. Write $\mathbb{E}_1, \mathbb{E}_2$ for the expectations of the respective distributions of $\mQ_1$ and $\mQ_2$. Then, leveraging independence of $\mQ_1$ and $\mQ_2$, idempotence of $\mPsi$, linearity of expectation, and $\Tr(\mPsi)\!=\!k$, we have:
	\begin{align}
		\E{\Tr(\mPsi_1 \mPsi_2 \mPsi_1)} = \E[1]{\Tr(\mPsi_1 \E[2]{\mPsi_2} \mPsi_1)} = \frac{\sparsedim}{\paramsdim} \E[1]{\Tr(\mPsi_1 \mI_{\paramsdim} \mPsi_1)} = \frac{\sparsedim}{\paramsdim} \Tr(\mPsi_1) = \frac{\sparsedim^2}{\paramsdim}
	\end{align}
	Replacing in the definition of $\overlap$ concludes the proof.
\end{proof}

\begin{table}[!htb]
  \centering
  \caption{\textbf{Empirical baselines for Grassmannian metrics.}
    Shown are the measured expectations (averaged over 50 samples at $\paramsdim\!=\!2048$) of different Grassmannian metrics between two uniformly random matrices in $\sO$ for different values of $\rho=\frac{k}{D}$. Note how $\overlap$ converges to exactly $\rho$ (see \Cref{app:overlap_proof}).\\[-0.5em]}
  \label{table:metrics_convergence}
  \resizebox{0.45\textwidth}{!}{%
    \begin{tabular}{@{}llllll@{}}
      \toprule
      \textbf{Metric}                    & \multicolumn{5}{c}{$\rho$}                         \\\cmidrule{2-6}
      & 0.005   & 0.01    & 0.05    & 0.2     & 0.4     \\ \midrule
      $\dist_{\overline{g}}$             & $0.03711$ & $0.05191$ & $0.11909$ & $0.24534$ & $0.36536$ \\
      $\dist_{\overline{c,\smallerTwo}}$ & $0.00197$ & $0.00161$ & $0.00048$ & $0.00046$ & $0.00026$ \\
      $\dist_{\overline{c,\smallerF}}$   & $0.02983$ & $0.04202$ & $0.09984$ & $0.21754$ & $0.33783$ \\
      $\dist_{\overline{p,\smallerTwo}}$ & $10^{-5}$ & $10^{-5}$ & $0$ & $0$ & $0$    \\
      $\dist_{\overline{p,\smallerF}}$   & $0.00248$ & $0.00479$ & $0.02513$ & $0.1057$ & $0.22527$ \\
      $\dist_{\overline{a}}$             & $0$     & $0$     & $0$     & $0$     & $0$     \\
      $\overlap$                         & $0.00495$ & $0.00955$ & $0.04962$ & $0.20022$ & $0.39980$ \\ \bottomrule
    \end{tabular}
  }
\end{table}

%
\subsection{Extended Discussion on Grassmannian Metrics}
\label{app:grass_discussion}

We extend here the main insights presented in \Cref{sec:metrics_comparison}:



\textbf{Mask-vs-mask metrics have larger variance, and in some cases lower expectations:}
All distributions for the mask-vs-mask modality have higher variance compared to the other modalities (see \Cref{app:grass_synth}). Furthermore, the distributions for the $\dist_{\overline{g}}$ and $\dist_{\overline{c,\smallerF}}$ metrics seem to follow lower trajectories for the mask-vs-mask modality, compared to the other modalities. This is not the case for $\dist_{\overline{p,\smallerF}}$ and $\overlap$, whose expectation does not seem to be affected by the modality.

\textbf{Extremal values of $\rho$ lead to saturation, except for \textbf{\boldmath$\overlap$}:}
Most metrics exhibit a nonlinear behaviour near the extremes (\Cref{fig:synth_r_OO}).
This is particularly so for collapsing metrics, but saturation can also be observed in the non-collapsing ones.
The only exception is $\overlap$, whose expectation is linear as shown in \Cref{app:overlap_proof}.

\begin{figure*}[!htb]
	\centering
	\includegraphics[width=\synthFigWidth]{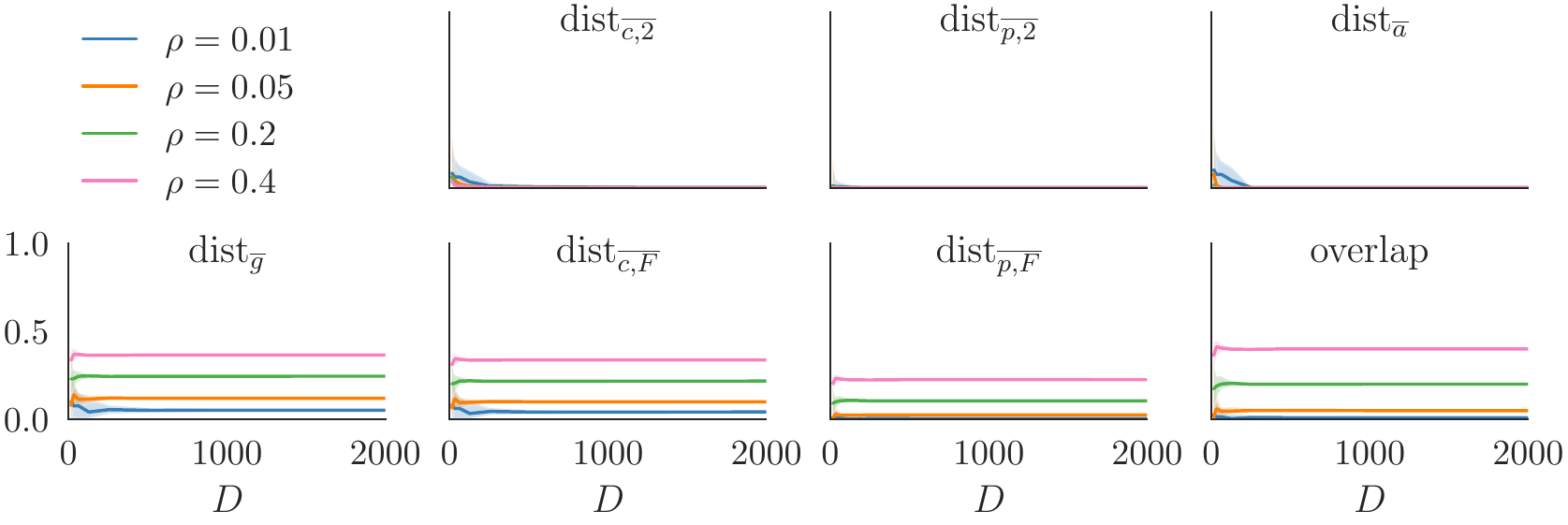}
	\caption{\textbf{Grassmannian metrics for random pairs of matrices $(\mQ_1, \mQ_2) \unifSample (\sO^{D \times k}, \sO^{D \times k})$  as a function of  $D$.}
		Each subplot shows a different metric, and each color corresponds to a different ratio $r=\frac{k}{D}$. For each $(\paramsdim, \sparsedim)$, we sample $50$ random pairs and report the median of the resulting distribution (line plot) as well as the $5$-$95$ percentiles (shaded regions).}
	\label{fig:synth_D_OO}
\end{figure*}

\begin{figure*}[!htb]
	\centering
	\includegraphics[width=\synthFigWidth]{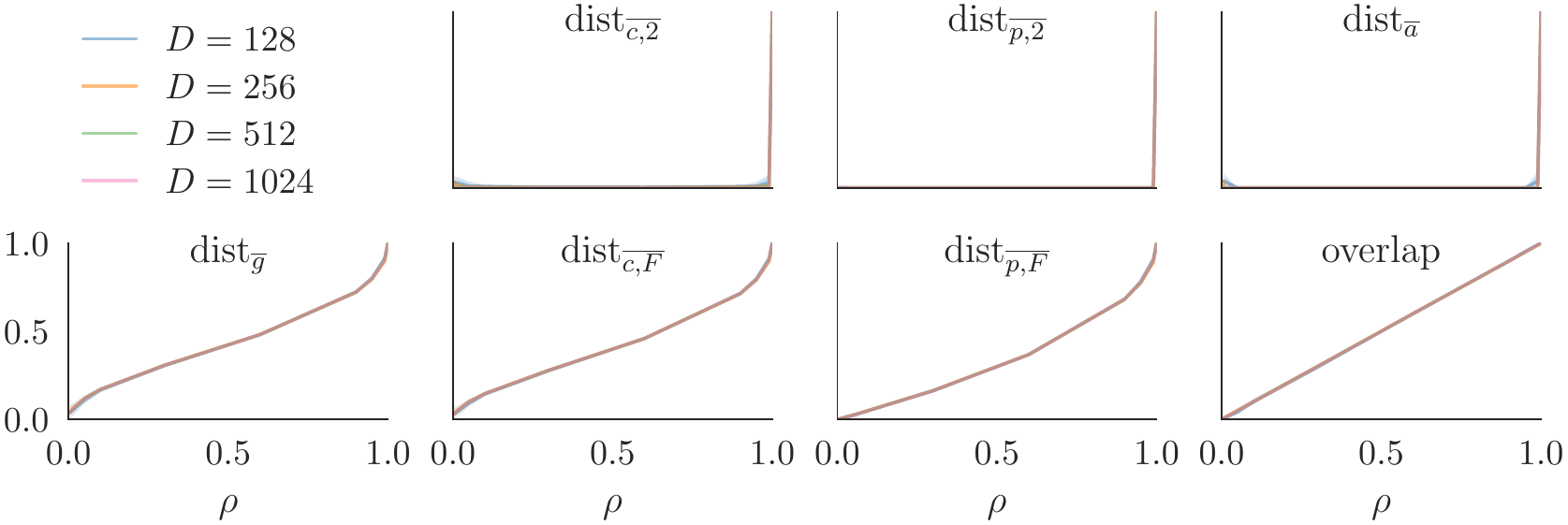}
	\caption{\textbf{Grassmannian metrics for random pairs of matrices $(\mQ_1, \mQ_2) \unifSample (\sO^{D \times k}, \sO^{D \times k})$ as a function of $\rho=\frac{k}{D}$.}
		Each subplot shows a different metric, and each color corresponds to a different dimension $\paramsdim$. For each $(\paramsdim, \sparsedim)$, we sample $50$ random pairs and report the median of the resulting distribution (line plot) as well as the $5$-$95$ percentiles (shaded regions).}
	\label{fig:synth_r_OO}
\end{figure*}

\begin{figure}[!htb]
	\centering
	\includegraphics[width=\synthFigWidth]{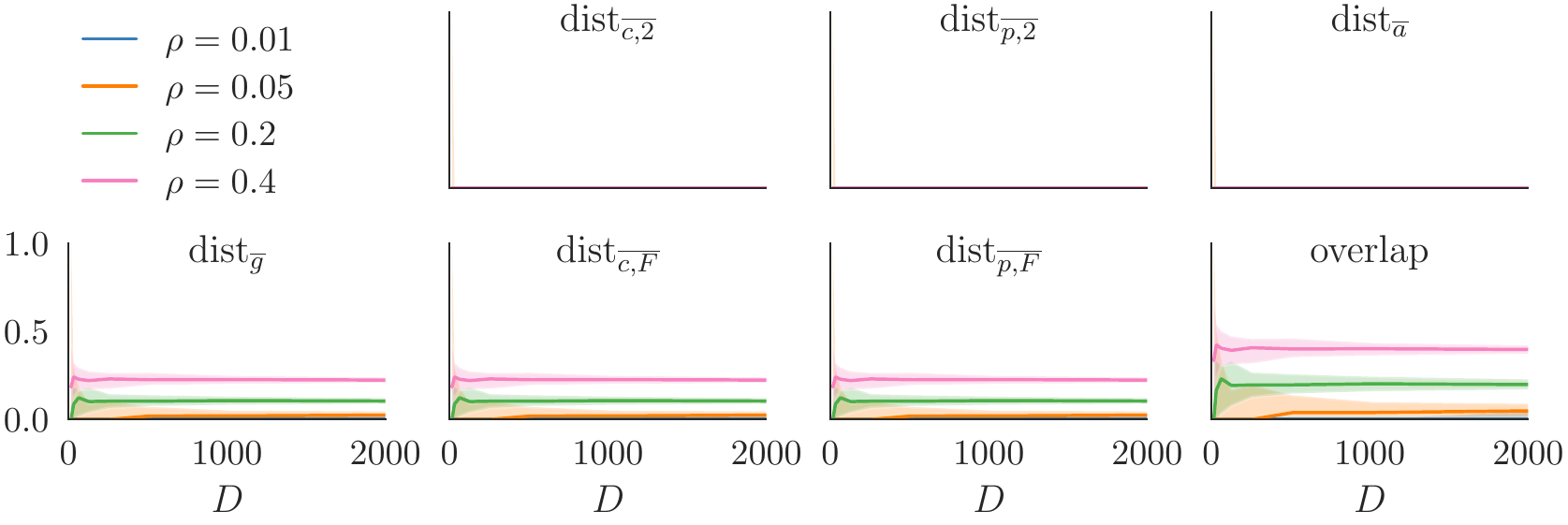}
	\caption{\textbf{Grassmannian metrics for random pairs of masks $(\mQ_1, \mQ_2) \unifSample (\sM^{D \times k}, \sM^{D \times k})$ as a function of $\paramsdim$ and for fixed $\rho$.}
		Each subplot shows a different Grassmannian metric with the different lines indicating four different ratios $\rho$. For each value of $\paramsdim$, we report the median metric over $50$ random pairs with the shaded regions showing the $5$-$95$ percentiles.}
	\label{fig:synth_D_BB}
\end{figure}

\begin{figure}[!htb]
	\centering
	\includegraphics[width=\synthFigWidth]{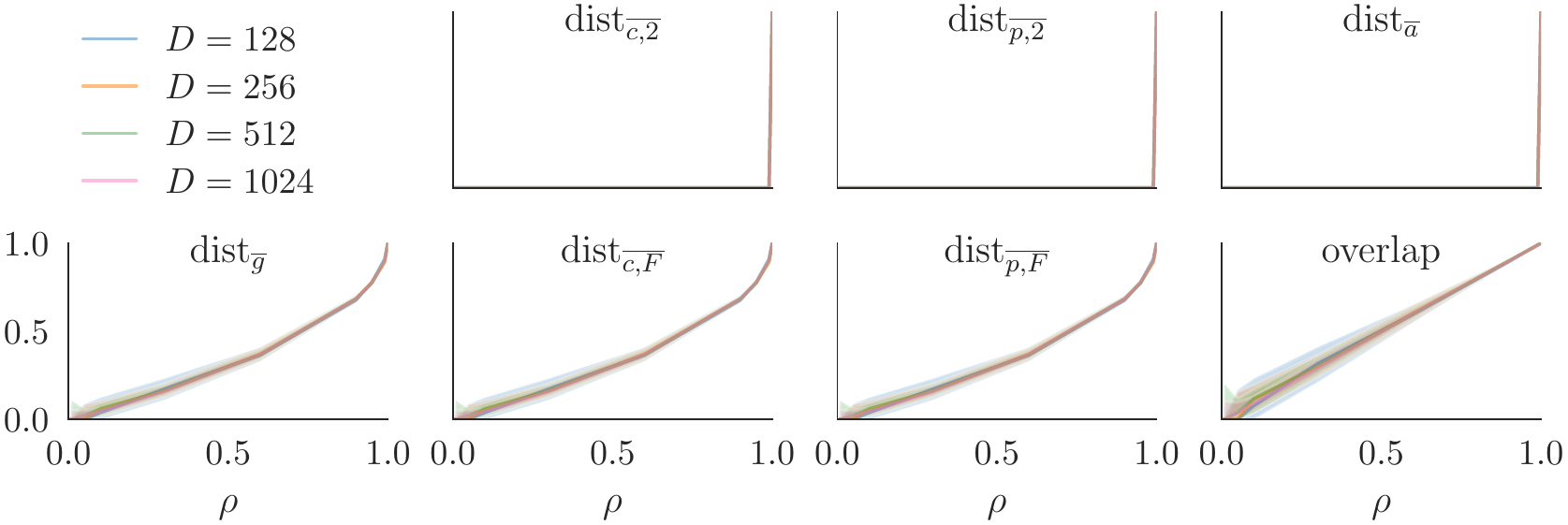}
	\caption{\textbf{Grassmannian metrics for random pairs of masks $(\mQ_1, \mQ_2) \unifSample (\sM^{D \times k}, \sM^{D \times k})$ as a function of $\rho$ and for fixed $\paramsdim$.}
		Each subplot shows a different Grassmannian metric with the different lines indicating four different dimensions $\paramsdim$. For each value of $\rho$, we report the median metric over $50$ random pairs with the shaded regions showing the $5$-$95$ percentiles.}
	\label{fig:synth_r_BB}
\end{figure}

\begin{figure}[!htb]
	\centering
	\includegraphics[width=\synthFigWidth]{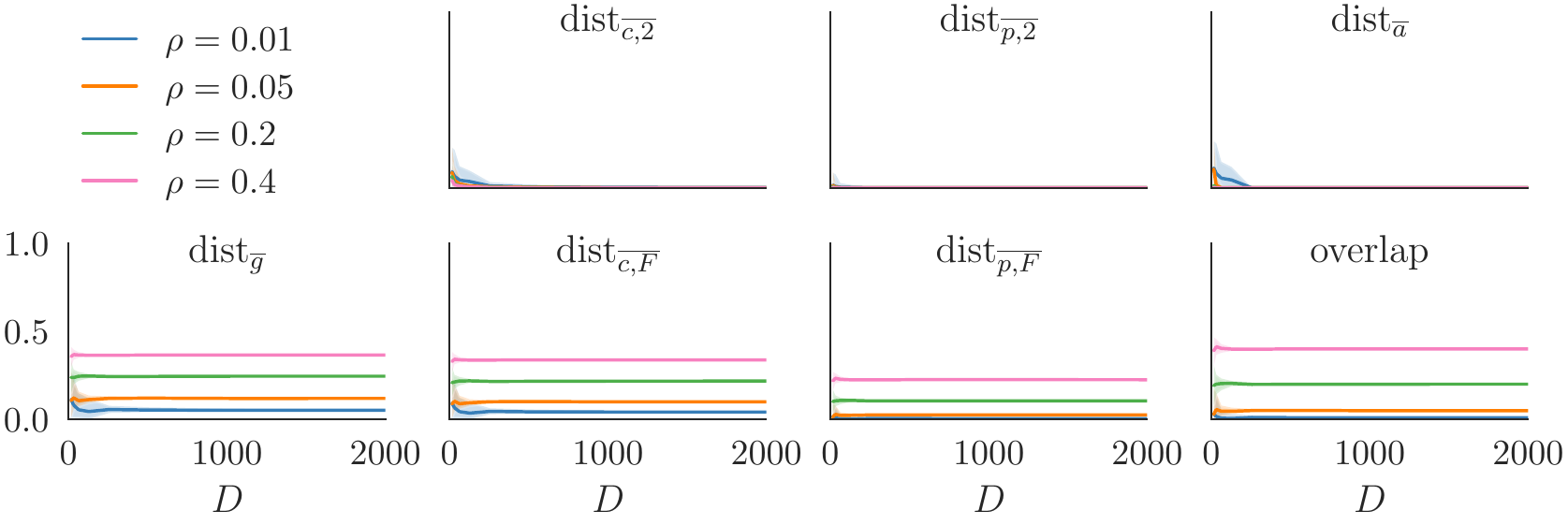}
	\caption{\textbf{Grassmannian metrics for random pairs of matrices and masks $(\mQ_1, \mQ_2) \unifSample (\sO^{D \times k}, \sM^{D \times k})$ as a function of $\paramsdim$ and for fixed $\rho$.}
		Each subplot shows a different Grassmannian metric with the different lines indicating four different ratios $\rho$. For each value of $\paramsdim$, we report the median metric over $50$ random pairs with the shaded regions showing the $5$-$95$ percentiles.}
	\label{fig:synth_D_OB}
\end{figure}

\begin{figure}[!htb]
	\centering
	\includegraphics[width=\synthFigWidth]{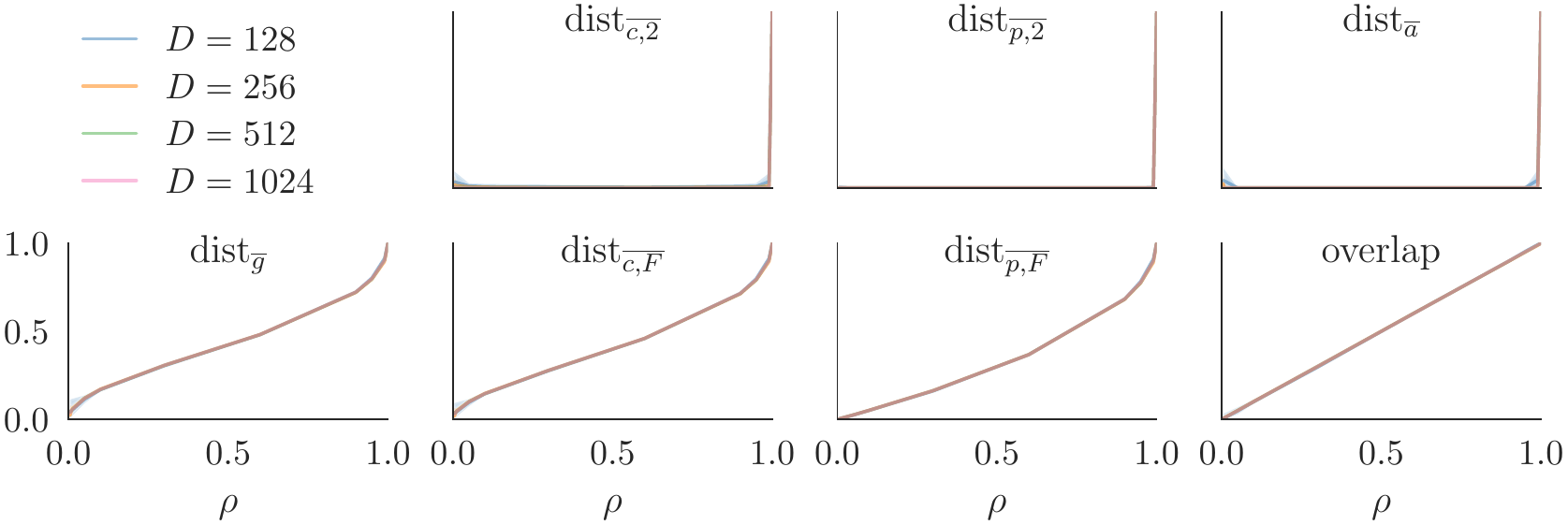}
	\caption{\textbf{Grassmannian metrics for random pairs of matrices and masks $(\mQ_1, \mQ_2) \unifSample (\sO^{D \times k}, \sM^{D \times k})$ as a function of $\rho$ and for fixed $\paramsdim$.}
		Each subplot shows a different Grassmannian metric with the different lines indicating four different dimensions $\paramsdim$. For each value of $\rho$, we report the median metric over $50$ random pairs with the shaded regions showing the $5$-$95$ percentiles.}
	\label{fig:synth_r_OB}
\end{figure}

\subsection{Analytical $\overlap$ Baseline vs. Sampling Random \ac{DL} Masks}
\label{app:random_masks}

In \Cref{sec:metrics_comparison} we justify the use of $\rho\!\defEq\!\sfrac{\sparsedim}{D}$ as an analytical, ``chance-level'' baseline for $\overlap$, that we use in \Cref{sec:exp} to observe that the overlap between spaces spanned by magnitude pruning masks and top Hessian eigenspaces in \ac{DL} is substantially large (see \eg \Cref{fig:overlap_by_k,fig:overlap_by_t}).

In this context, one natural question one may ask is: \emph{Do random \ac{DL} masks also lead to higher-than-baseline overlaps?} To answer this question, we trained the $16\!\times\!16$ \mnist setup following \Cref{app:beyond_perturbation}, and also for steps $t \in \{0, 100, 1000\}$, we computed the $\overlap$ between the top-$\sparsedim$ Hessian eigenspace and 10 randomly chosen masks of $\sparsedim$ nonzeros (the steps were also chosen to cover initialization, mid-training and convergence, as illustrated in \Cref{fig:perturb_acc}).
Results are shown in \Cref{fig:rand_mask_baseline}: We observe that, indeed, \emph{random \ac{DL} masks tend to behave like our analytical baseline}, which supports the use of the analytical baseline in our experiments.

\begin{figure*}[!htb]
  \centering
  \includegraphics[width=\synthFigWidth]{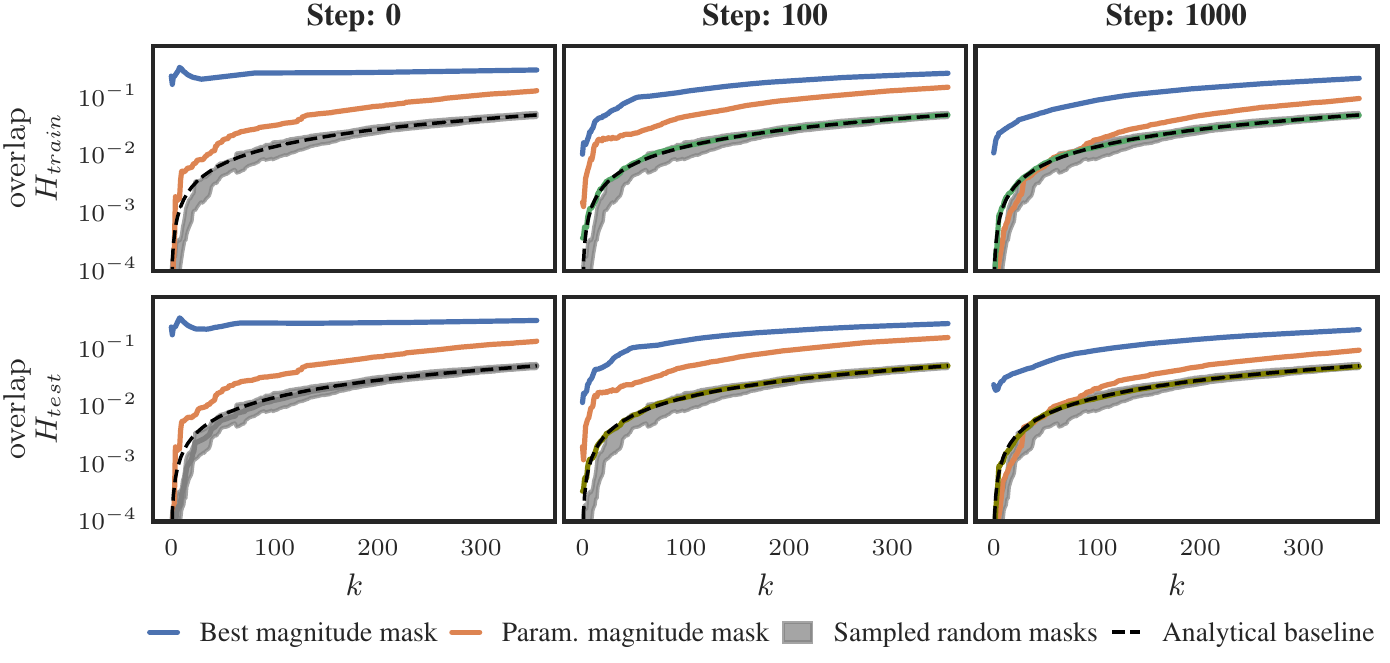}
  \caption{
    \textbf{$\overlap$ measurements for different masking criteria against $\hessian_{train}$ (top row) and $\hessian_{test}$ (bottom row)}. \emph{Best magnitude mask} represents the $\overlap$ obtained if we pick the $k$ parameters with largest overlap with the top-$\sparsedim$ eigenspace. \emph{Param. magnitude mask} corresponds to the $k$-largest parameters by magnitude. The \emph{random mask} grey area covers $\pm$ one standard deviation for the $\overlap$ between $\hessian$ and 10 randomly sampled masks. See \Cref{fig:perturb_acc} for respective accuracies at given steps.}
  \label{fig:rand_mask_baseline}
\end{figure*}

Another natural follow-up question is: \emph{How special are the parameter magnitude masks? Would some other non-random mask also achieve a high overlap?} To tackle this question, we also plot in \Cref{fig:rand_mask_baseline} the $\overlap$ for the top-$\sparsedim$ parameter magnitude masks, and the maximal $\overlap$ achievable by the best possible $\sparsedim$-mask, serving as upper bound. We observe that, \emph{while the magnitude pruning mask is substantially larger than baseline, it is still far from optimal}.

Still, there is a limited number of masks that can yield high $\overlap$: Due to the orthogonality of the Hessian eigenbasis, each column of $\eigQtilde$ must have an $\ell_2$ norm of $1$. This means that if a given set of parameters has an $\overlap$ of $\alpha$, the $\overlap$ for any non-intersecting set of parameters is at most $1 - \alpha$. Intuitively, the $\overlap$ is a limited resource, and therefore, masks with an $\overlap$ significantly above chance-level are bound to be scarce.



\section{Supplementary Material for the Experiments in \Cref{sec:exp}}

\label{app:exp}

\subsection{Results for $16\!\times\!16$ \mnist}
\label{app:exp_mnist}

This section includes supplementary material as part of the exhaustive experiments done for the \mnist toy problem. \Cref{fig:collapse} and \Cref{fig:stabilization} verify the existence of early collapse and stabilization in both Hessian top space and parameter masks. They also show that \ac{SGD} was able to successfully train the model under this setting. \Cref{fig:grassmannians} corroborates that the behaviour of the Grassmannian metrics analyzed in \Cref{sec:metrics} is also present for \ac{DL} problems.

\begin{figure}[h!]
\centering
\includegraphics[width=1\textwidth]{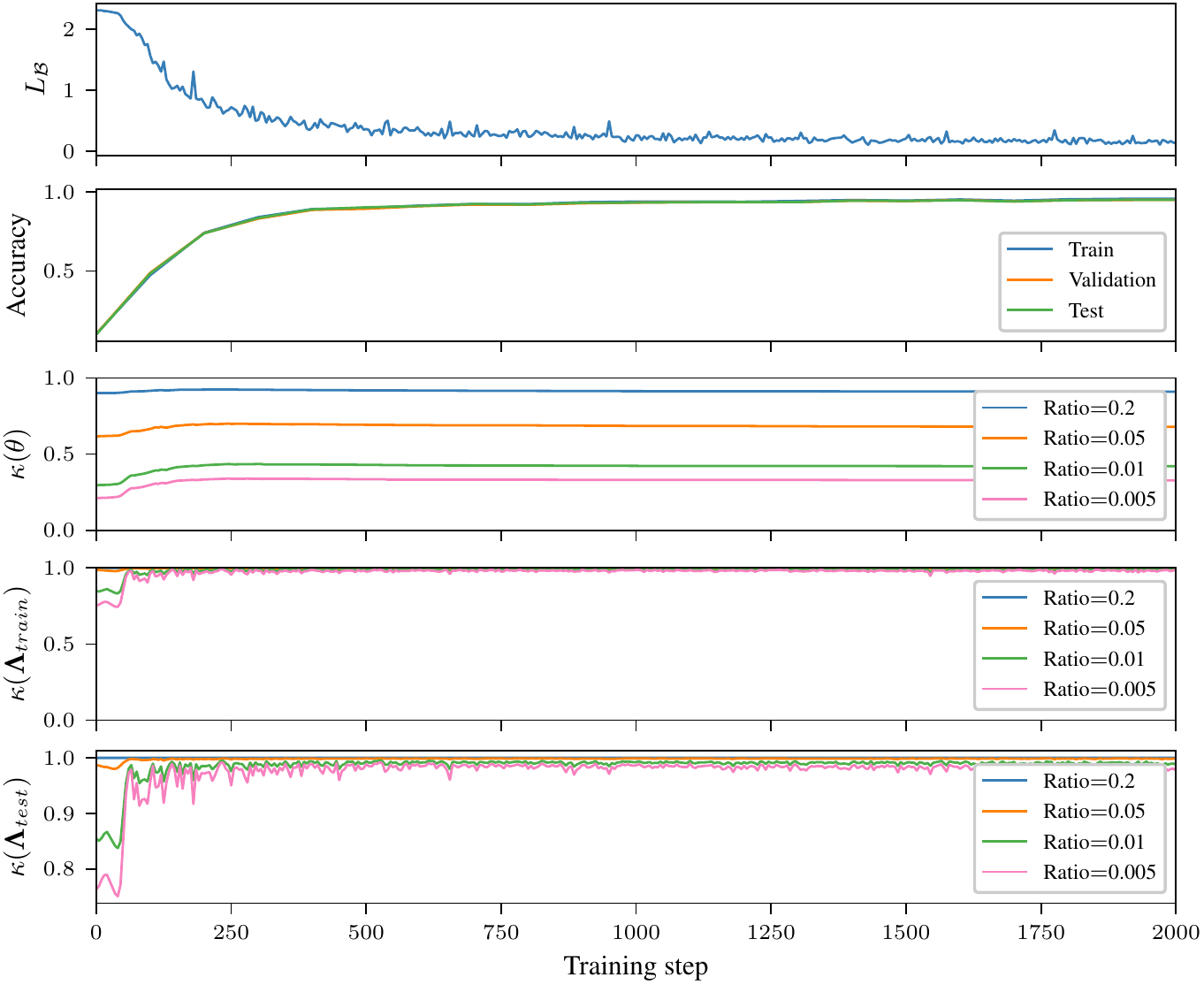}
\caption{\textbf{Both the parameters of the a neural network as well as their Hessian spectrum collapse early during training.}
  \textit{(top)} The top two subplots show the mini-batch loss $\batchLoss$ as well as the train/validation/test accuracy of the $\num{7030}$-parameter model trained on $16\!\times\!16$ downsampled MNIST (see \Cref{sec:exp_tinymnist}).
	\textit{(middle)} Looking at the top $20\%, 5\%, 1\%, 0.5\%$ of parameters by magnitude, we can see that very early during training, most of the energy is concentrated on a small subset of the parameters (see \Cref{sec:background_pruning} for a definition of $\kappa$). For example, shortly after initialization, the top $0.5\%$ largest parameters by magnitude have roughly $\sfrac{1}{4}$ of the total $\ell_2$ norm of all parameters.
	\textit{(bottom)} We can observe a similar behaviour for the Hessian spectrum on both the training set (fourth subplot) and the test set (fifth subplot). Only a few steps after training, most of the energy is concentrated in only $0.5\%$ of the eigenvalues.}
\label{fig:collapse}
\end{figure}

\begin{figure}
	\centering
	\includegraphics[width=1\textwidth]{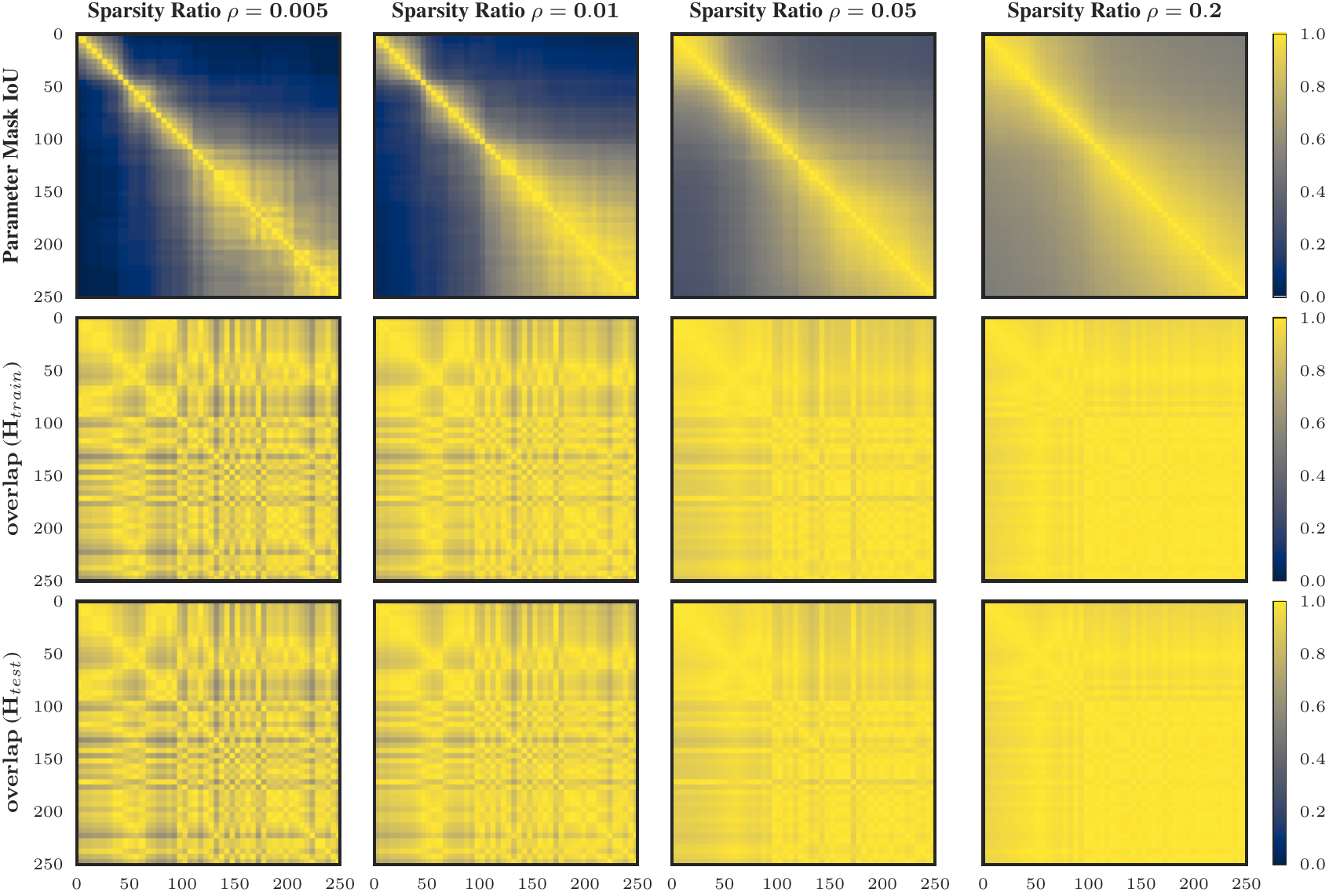}
	\caption{\textbf{Both parameter pruning masks (top row) and Hessians (bottom two rows) remain relatively stable after an initial training phase.} Following \cite{early_bird}, the depicted matrices represent all pairwise similarities (\ie higher is better) from the beginning of training (top left corners) until step 250 (bottom right corners), for the $\num{7030}$-parameter model trained on $16\!\times\!16$ downsampled MNIST (see \Cref{sec:exp_tinymnist}). For this reason, all matrices are symmetric and have unit diagonals.
	\textit{(top)} Even when selecting only $0.5\%$ of the parameters (left column), masks collected at different training steps show a remarkable similarity after an initial phase of training.
	\textit{(middle and bottom)} The $\overlap$ metric for top Hessian eigenspaces on the train (middle) and test set (bottom) extracted at different training steps and for different subspace sizes (columns). Starting from initialization, the Hessian eigenspaces do not change significantly over the course of training. No substantial differences in behaviour between $\hessian_{test}$ and $\hessian_{train}$ can be observed.}
	\label{fig:stabilization}
\end{figure}

\begin{figure}
	\centering
	\includegraphics[width=1\textwidth]{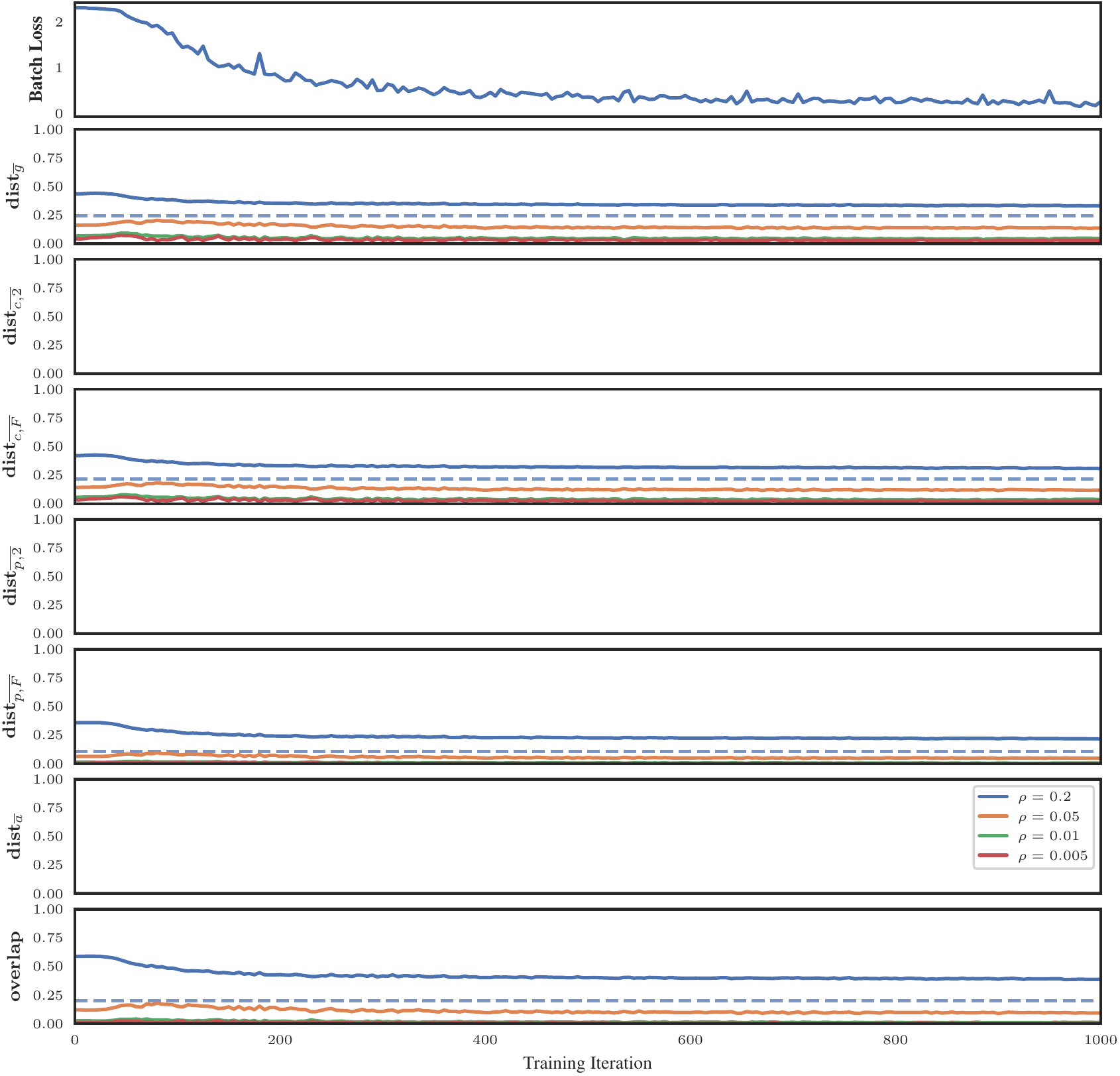}
	\caption{\textbf{Different Grassmannian metrics between pruning masks and top Hessian subspaces.} Each line shows a particular sparsity level $\rho$, \ie. the ratio of unpruned parameters or the size of the top Hessian subspace relative to the full Hessian, as a function of training progress for the $\num{7030}$-parameter model trained on $16\!\times\!16$ downsampled MNIST (see \Cref{sec:exp_tinymnist}). All non-collapsing metrics reveal a consistent and substantial similarity between spaces spanned by pruning masks and top Hessian subspaces well above random chance (random baselines gathered from our synthetic experiments are shown in dashed lines for $\rho=0.2$), while collapsing metrics are effectively zero due to the large $\paramsdim$.
	For better visibility, we limit the plot to the first $1000$ steps but all metrics remain stable afterwards.}
	\label{fig:grassmannians}
\end{figure}

\clearpage
\subsection{Discussion on Computational Resources}
\label{app:runtimes}

We now discuss the resources involved in the computation of the Hessian eigendecompositions, as well as the experimental design choices surrounding them. The goal is to reflect the scale of computations involved in our experiments, and to convey an idea of the overall resources needed to reproduce them. For more rigorous asymptotics involving memory, arithmetic and measurement costs, readers are invited to review the references exhaustively provided in \Cref{sec:sketchy_overlap}, particularly \cite[][Sec.~4]{ssvd19}.

\begin{figure}[!h]
	\centering
	\includegraphics[width=0.85\textwidth]{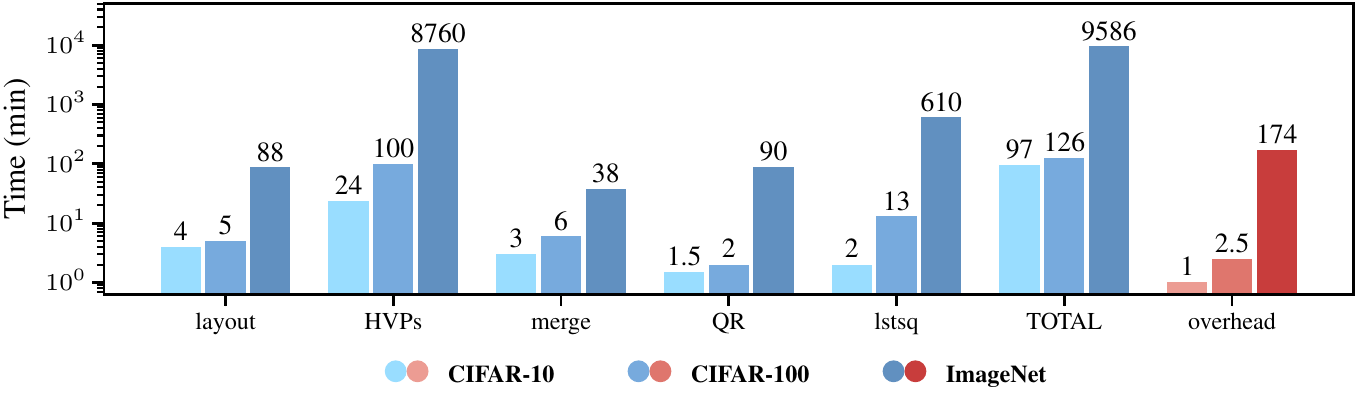}
	\caption{\textbf{Runtimes of main \textsc{SEIGH} operations to compute a single Hessian eigendecomposition}, assuming a single computer with $\num{400}$GB RAM equipped with an NVIDIA A100 (40GB) graphics card. Note that the three involved eigendecompositions feature different networks, datasets, and rank, so a blind one-to-one comparison of runtimes is not possible: refer to \Cref{app:runtimes} for an interpretation of this figure.
        }
	\label{fig:runtimes}
\end{figure}

\Cref{fig:runtimes} gathers the average times it took to compute a single sketched Eigendecomposition, for each one of the Hessians covered in \Cref{sec:exp_largescale}. Specifically:
\begin{itemize}[topsep=0pt]
  \setlength{\itemsep}{2pt}
\item Resources needed to tune or train the model are here ignored. We used \texttt{PyTorch} \cite{pytorch}.
\item The \emph{layout} step allocates the $\bigO(\paramsdim \sparsedim)$ storage memory in the form of virtual \texttt{HDF5} datasets\footnote{\url{https://github.com/HDFGroup/hdf5}}. This allows to efficiently write numerical data in parallel, which is instrumental for the scalability of the procedure.
\item The \acp{HVP}, computed with the help of \texttt{CurvLinOps}\footnote{\url{https://github.com/f-dangel/curvlinops}},  correspond to the measurements in lines 1-2 of \Cref{alg:seigh}. As we can see, they take the bulk of the runtime: each \ac{HVP} requires two forward and backward passes over the Hessian dataset (consisting of 500, 1000 and 5000 samples for the three respective problems, see $N_{train}/N_{test}$ in \Cref{table:hpars}). While this can be done in batched fashion, the \ac{HVP} runtime is linearly affected by dataset size. Another linear factor is the number of measurements (see $n_{\smallerO}$ in \Cref{table:hpars}). Crucially, those can be fully parallelized, so the runtimes reported under the \emph{\acp{HVP}} column can be divided by the number of available machines. This property of sketched methods, combined with the HDF5 technology, is the main enabler for scalabiltiy here.
\item The \emph{merge} step stems from a technicality: most operative systems don't allow a process to open thousands of files at the same time. Hence, we need to merge the virtual HDF5 dataset into a monolithic file. This can be done with only $\bigO(\paramsdim)$ memory overhead.
\item Although \cite[][Sec.~4]{ssvd19} points out that the \emph{QR decomposition} (line 3 of \Cref{alg:seigh}) dominates the (asymptotic) arithmetic cost for the whole procedure, interestingly this a relatively fast step. We also note that the main reason why we needed so much RAM memory was the need to load $\bigO(\paramsdim \sparsedim)$ entries in-memory to perform this step. Out-of-core QR orthogonalizations would greatly help reducing RAM requirements.
\item The remaining operations (lines 4-8 in \Cref{alg:seigh}) are here gathered under the \emph{lstsq} label, which takes a considerable portion of the overall runtime. To improve this, lines 5 and 6 could be parallelized, roughly halving the runtime, but still faster solutions would greatly help towards more scalable Hessian eigendecompositions.
\end{itemize}

\Cref{fig:runtimes} also reports the \emph{TOTAL} runtime of the steps purely involved with the eigendecomposition. Further \emph{overheads}, related \eg to the use of a computational cluster, are reported separately. In practice, computing \eg a single \resneteighteen Hessian eigendecomposition with up to fifteen A100s took a bit over a day. While this may seem like a heavyweight computation, it is still a substantial improvement, enabling previously intractable computations. In this section we have also outlined a few ways in which this could be further improved.

Due to the scale of the experiments, we had to be conservative with the choice of hyperparameters. We chose dataset sizes $N_{train}/N_{test}$ such that all classes are balanced, and each class has at least 5 examples. While the chosen number of measurements $n_{\smallerO}$ is much smaller than the ambient dimension $\paramsdim$, the tremendously rank-defficient structure of $\hessian$ (see \Cref{sec:background_hessian}) allows for such a disparity. Still, it has been noted that the numerical rank of $\hessian$ may be linked to the number of classes \cite[eg][]{GurAri2018,papyan_traces,vivit}. For this reason we chose $n_{\smallerO}$ values well above this limit.

\end{document}